\newtheorem{theorem}{Theorem}
\newtheorem{definition}{Definition}
\newcommand{\bfx}{\mathbf{x}}
\newcommand{\bfw}{\mathbf{w}}
\newcommand{\bfu}{\mathbf{u}}
\newcommand{\R}{\mathbb{R}}
\newcommand{\E}{\mathbb{E}}
\newcommand{\cX}{\mathcal{X}}
\newcommand{\cU}{\mathcal{U}}
\newcommand{\cT}{\mathcal{T}}
\newcommand{\cE}{\mathcal{E}}
\newcommand{\cH}{\mathcal{H}}
\newcommand{\cO}{\mathcal{O}}
\newcommand{\cC}{\mathcal{C}}
\newcommand{\cF}{\mathcal{F}}
\newcommand{\cL}{\mathcal{L}}
\newcommand{\cW}{\mathcal{W}}
\newcommand{\cQ}{\mathcal{Q}}
\newcommand{\cD}{\mathcal{D}}
\newcommand{\Paths}{\mathsf{Paths}}
\newcommand{\KL}{\text{KL}}
\newcommand{\KSD}{\text{KSD}}
\newcommand{\revision}[1]{{#1}}
\DeclareMathOperator*{\argmin}{arg\,min}
\DeclareMathOperator*{\argmax}{arg\,max}
\newcommand{\secref}[1]{\S\ref{#1}}
\begin{document}

\title{Stein Variational Ergodic Search}


\author{\authorblockN{Darrick Lee}
\authorblockA{Mathematical Institute\\
University of Oxford\\
darrick.lee@maths.ox.ac.uk}
\and
\authorblockN{Cameron Lerch}
\authorblockA{Yale University\\
cameron.lerch@yale.edu}
\and
\authorblockN{Fabio Ramos}
\authorblockA{NVIDIA, USA\\
The University of Sydney, Australia\\
fabio.ramos@sydney.edu.au}
\and
\authorblockN{Ian Abraham}
\authorblockA{Yale University\\
ian.abraham@yale.edu}
}


%

\maketitle

\begin{abstract}
    Exploration requires that robots reason about numerous ways to cover a space in response to dynamically changing conditions. 
    However, in continuous domains there are potentially infinitely many options for robots to explore which can prove computationally challenging.
    How then should a robot efficiently optimize and choose exploration strategies to adopt?
    In this work, we explore this question through the use of variational inference to efficiently solve for distributions of coverage trajectories. 
    Our approach leverages ergodic search methods to optimize coverage trajectories in continuous time and space.  
    In order to reason about distributions of trajectories, we formulate ergodic search as a probabilistic inference problem. 
    We propose to leverage Stein variational methods to approximate a posterior distribution over ergodic trajectories through parallel computation. 
    As a result, it becomes possible to efficiently optimize distributions of feasible coverage trajectories for which robots can adapt exploration.
    We demonstrate that the proposed Stein variational ergodic search approach facilitates efficient identification of multiple coverage strategies and show online adaptation in a model-predictive control formulation.
    Simulated and physical experiments demonstrate adaptability and diversity in exploration strategies online. 
\end{abstract}

\IEEEpeerreviewmaketitle

\section{Introduction}

    Effective robotic exploration requires robots to reason about different ways to explore a space. 
    In the presence of uncertainty and in unstructured environments, having multiple exploration strategies for robots to quickly choose from can be advantageous. 
    However, optimizing for multiple exploratory paths can be computationally challenging, especially in continuous domains where trajectory solutions are infinite dimensional. 
    Defining the problem of exploration on a grid~\cite{choset2000coverage, bahnemann2021revisiting, galceran2013survey, sung2023survey} can provide the means to enumerate many coverage paths; however, grid-based methods limit where the robot can visit in continuous, unstructured environments. 

    Recent advancements in curiosity- and information- based exploration have allowed robots to explore vast domains~\cite{sung2023survey, pathak2017curiosity, mazzaglia2022curiosity, silverman2013optimal, Chen-RSS-22}. 
    However, they are often limited to exploring using one strategy, i.e., an information maximizing strategy. 
    As a result, exploring becomes myopic where immediate information gain is sought after without regard to advantageous states in the future.
    Ergodicity-based exploration techniques show promise in breaking away from myopic strategies by posing exploration as a coverage problem based on time-averaged trajectory visitation~\cite{mathew2011metrics, miller2013trajectory}. 
    More specifically, ergodic methods optimize over where trajectories spend time on average as a function of the expected measure of information. 
    As a result, ergodic search methods allow robots to optimize exploratory paths in multi-modal search problems~\cite{miller2015ergodic}. 
    While the literature has proven ergodic-methods to produce effective exploration strategies, they only optimize one search strategy at any given moment, thus limiting how robots can adapt. 

    \begin{figure}
        \centering
        \includegraphics[width=\linewidth]{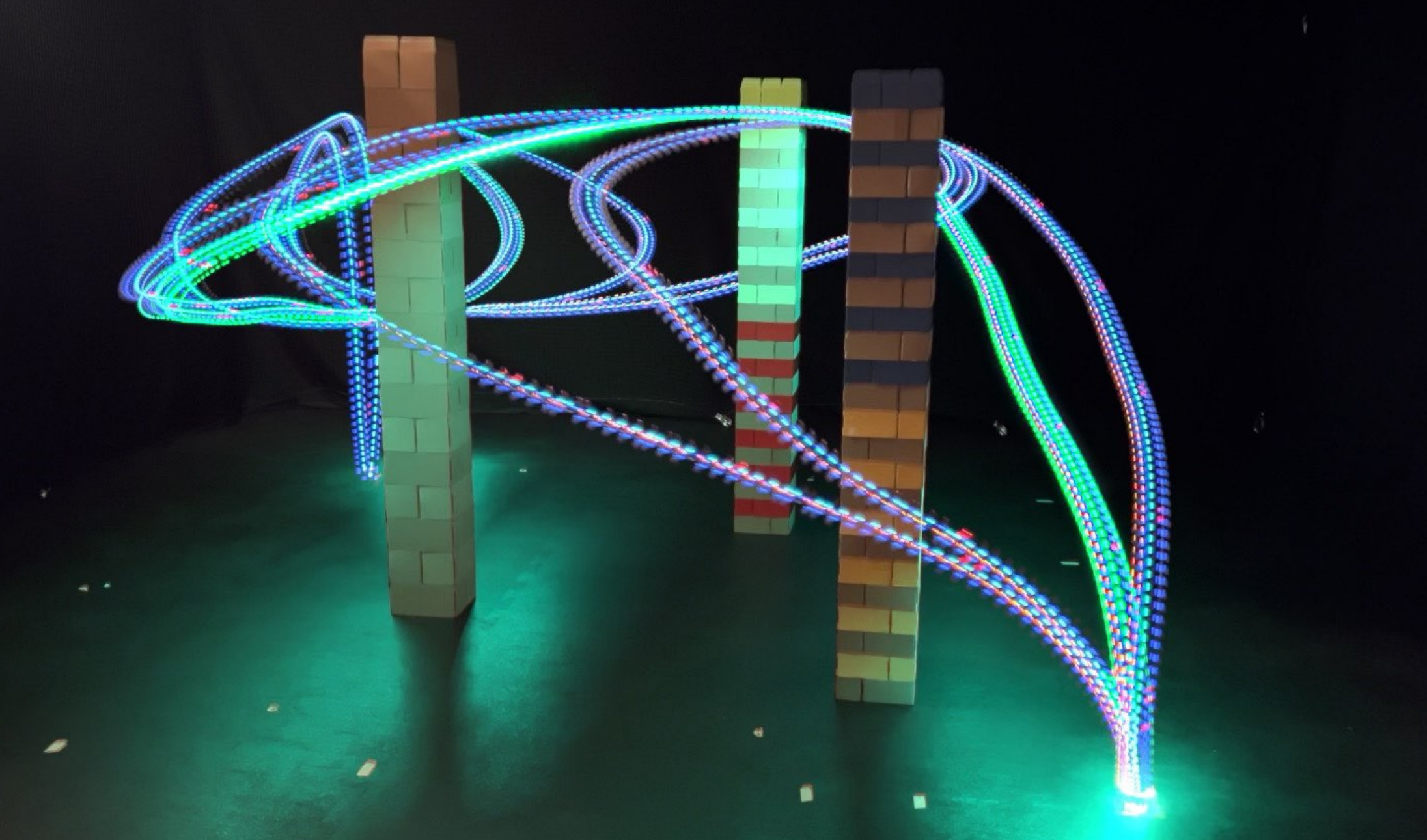}
        \caption{\textbf{The Stein Variational Ergodic Approach. } Robotic exploration is challenging as there are many ways to effectively explore an area that robots need to reason about. Calculating all the possible exploration strategies can be computationally prohibitive, especially when there is no guarantee that optimized solutions will coverage to a diverse set of strategies. We propose to solve this problem by posing coverage and exploration as an inference problem over distributions of trajectories. Our approach leverages ergodic exploration techniques in conjunction with Stein variational methods to efficiently optimize diverse exploration strategies in parallel over continuous domains (see above). Illustrated is a set of $4$ exploration strategies optimized to uniformly explore around the cylinders. The green trajectory indicates the selected best strategy.}
        \label{fig:enter-label}
    \end{figure}

    Prior work suggests that the non-convex form of ergodic search methods are capable of identifying multiple optimal trajectories through variations in initial condition~\cite{miller2013trajectory}. 
    The challenge with reasoning about multiple, i.e., distributions, of trajectories is that the optimization can become computationally prohibitive. 
    Furthermore, there is no guarantee that certain initial conditions will not converge onto the same ergodic search strategy which is not ideal, especially in online exploration scenarios where mode collapse on trajectory solutions can have catastrophic consequences. 

    Stein variational inference methods show promise in providing the necessary tools to approximate distributions of trajectories in a computationally tractable manner~\cite{liu2016stein}.
    These methods leverage approximate inference in a non-parametric manner that 1) empirically estimates complex distributions, and 2) can do so in a computationally efficient manner through parallel computation. 
    Thus, in this work, we propose a novel formulation of ergodic exploration using Stein variational gradient decent methods~\cite{liu2016stein} that solves an inference problem on distributions of trajectories.
    We find that the spectral construction of the ergodic metric promotes discovery of locally optimal solutions which can be leveraged to guide robot exploration with multiple redundancies. 
    We demonstrate the efficacy of our approach in simulation and on a physical drone system that can efficiently and effectively adapt exploration in cluttered environments.
    Furthermore, we find that the synergy between Stein variational methods and ergodic search methods promotes a diverse range of exploration strategies which can be acquired in a computationally efficient manner. 
    In summary, our contributions are 
    \begin{enumerate}
        \item A Stein variational ergodic search method for trajectory optimization and control; 
        \item Demonstration of diverse exploration strategies in a computationally efficient manner; and
        \item Real-time exploration and adaptation of multiple ergodic trajectories in dynamic domains.
    \end{enumerate}

    The remainder of the paper is structured as follows: Section~\secref{sec:related_work} provides an overview of related work, Section~\secref{sec:ergodic_search} introduces the problem of coverage via ergodic search, Section~\secref{sec:stein_var} introduces the Stein variational gradient descent, Section~\secref{sec:stein_var_erg} derives the proposed Stein variational ergodic search approach, and Sections~\secref{sec:results}, and~\secref{sec:conclusion} presents the results and conclusions. 
    
\section{Related Work} \label{sec:related_work}

    \textbf{Coverage and Exploration.}
        Robotic exploration seeks to guide robots towards unexplored areas within a domain.
        Likewise, coverage is concerned with generating paths and placement of a robot's sensors such that it covers a bounded domain~\cite{butler1999contact, acar2002morse}. 
        Early solutions to coverage and exploration are often formulated over discrete grids (defined on continuous space) and shown to have completeness guarantees through novel boustrophedon search patterns (which originates the lawnmower pattern~\cite{choset2000coverage}) and the traveling salesperson problem~\cite{ applegate2011traveling, laporte1983generalized}.
        Through grids, it is relatively straightforward to use exhaustive computational techniques to enumerate the many different, feasible paths a robot can take to explore and cover an area. 
        Extending these ideas to continuous domains (which provides infinite spatial resolution for the robot to traverse) often proves challenging as the number of possible ways to explore a domain becomes infinite. 

        Recent methods in reinforcement learning and information-based methods circumvent the issues with transitioning to continuous domains by leveraging ``curiosity'' measures~\cite{pathak2017curiosity, burda2018large} derived from information theory~\cite{chirikjian2011stochastic} to encourage exploration.
        These measures provide a signal which informs a robot where it is beneficial to visit and have demonstrated comparability with continuous domains~\cite{pathak2017curiosity, mazzaglia2022curiosity}.
        However, many of these methods tend to be myopic in nature, only focusing on immediate information gain, and often limited to single-mode solutions which can limit the adaptability of robots in the wild. 
        More recent efforts in ergodicity-based methods (also referred to as ergodic exploration, coverage, or search) have demonstrated that it is possible to compute intricate coverage patterns through a spectral-based metric over continuous domains~\cite{miller2015ergodic, abraham2020active, Dong-RSS-23}. 
        Ergodicity-based methods generate coverage by minimizing the difference between the (expected) spatial distribution of information and the time-averaged spatial distribution of an agent's trajectory within the domain \cite{mathew2011metrics}. 
        Interestingly, prior work has provided evidence that suggests ergodic coverage strategies produce optimal exploration strategies~\cite{dressel2018optimality}. 
        The spectral, multi-scale composition of the ergodic metric~\cite{scott2009capturing} suggests there exist many solutions that can be exploited by robots.
        However, it has yet to be demonstrated how one can calculate sets of ``good'', locally-optimal ergodic trajectory solutions that a robot can adapt as its search strategy. 

    \textbf{Control as Inference.} 
        The introduction of probabilistic inference to optimal control has enhanced the capabilities of robots via sample-based methods. 
        Specifically, the non-convexity of many robot task specifications makes it challenging for gradient-based methods to find reasonable trajectory solutions that satisfy a task. 
        Incidentally, this is caused by the non-convexity of many robotic tasks due to its specification or the underlying complexity of the interactions, e.g., contact dynamics. 
        Sample-based methods derived from probabilistic inference circumvent these issues through zero-order optimization, e.g., predictive sampling~\cite{howell2022predictive}, and model-predictive path integral (MPPI)~\cite{williams2017model, williams2016aggressive}, that are less sensitive to the non-convexity of robotic tasks. 
        Add in the significant technological leap of GPU-based computation, and control as an inference problem becomes a powerful tool for robotics. 
        However, like with gradient-based methods, many of the sample-based techniques typically solve for only one locally optimal solution, ignoring other equally viable ways to solve a task. 
        
        Having more than one plan for which robots can switch between is highly valuable for real-world systems that need to quickly change strategy. 
        For example, in crowd-based navigation, certain planned paths often become infeasible due to the dynamically changing environment. 
        Rather than having to recalculate a new path, it is more efficient and robust for a robot to have several paths to choose from~\cite{SunM-RSS-21, lambert2021stein}. 
        The recent adoption of optimal transport and variational methods in optimal control has demonstrated promise is isolating the many locally optimal solutions for robots to choose from~\cite{le2023accelerating, lambert2021stein}. 
        These approaches pose trajectory solutions to non-convex optimal control problems as inference problems were solutions are approximated as distributions. 
        As a result, it becomes possible for robots to optimize for many locally optimal solutions and adapt in real-time. 
        However, most of the applications have focused on obstacle avoidance and point-to-point navigation. 
        Exploration and coverage has not yet been explored primarily due to 1) the difficulty of forming coverage objectives in continuous domains, and 2) stable convergence of solutions over long-time horizons. 
        
        In this work, we demonstrate it is possible to solve for many coverage trajectory solutions in continuous space that promote diverse and robust exploration over long-time horizons by forming ergodic coverage methods as an inference problem and leverage second-order Stein variational gradients~\cite{lambert2021stein, liu2016stein} to converge on solutions.

\section{Background and Preliminaries}

\subsection{Ergodic Search} \label{sec:ergodic_search}

    Let us first define a robot's state space and control space as $\cX \subseteq \R^n$ and $\cU \subseteq \R^m$. 
    Next let us define the robot's state trajectory $x(t) : \R^+ \to \mathcal{X}$ as the solution to the initial value problem 
    \begin{equation} \label{eq:dynamics}
        x(t) = x(0) + \int_{0}^t f(x(\tau), u(\tau)) d\tau
    \end{equation}
    where $x(0) \in \cX$ is an initial condition, $u(t) : \R^+ \to \cU$ is a control trajectory, and $f(x,u) : \cX \times \cU \to \cT_\cX$ is the continuous-time (potentially nonlinear) dynamics of the robot.
    We denote
    \begin{equation} \label{eq:paths}
        \Paths(\cX) \coloneqq \bigcup_{T > 0} \cC([0,T], \cX)
    \end{equation}
    as the set of all continuous trajectories defined on arbitrary finite time intervals.
    In addition, let us define a bounded domain where the robot explores as $\mathcal{W}= [0,L_0]\times \ldots \times [0, L_{v-1}]$, where $v\leq n$ and $L_i$ are the bounds of the workspace. 
    We fix a map $g(x) : \mathcal{X} \to \mathcal{W}$ that projects state space $\mathcal{X}$ to exploration space $\mathcal{W}$, e.g., a selection matrix $g(x) = \mathbf{S}x$ that isolates and scales certain states that correspond to exploration.
    \begin{definition} \textbf{Time-Averaged Trajectory Statistics.} \label{def:ergodicity}
        Let $m$ denote the Lebesgue measure on $\R^+$, $x(t): \R^+ \to \cX$ be a trajectory, and $g: \cX \to \cW$.
        For each $T \in \R^+$, let the probability measure $\cQ_T$ on $\cW$ that defines the time-averaged trajectory visitation statistics integrated along time $[0,T]$ be defined by
        \begin{equation} \label{eq:Q_T}
            \cQ_T(A) \coloneqq \frac{1}{T} m\big( (g\circ x)^{-1}(A) \cap [0,T]\big),
        \end{equation}
        where $A \subset \cW$ is a Borel set.
    \end{definition}

    \begin{definition} \textbf{Ergodicity.} A trajectory $x(t)$ is ergodic with respect to a Borel probability measure $\mu$ on $\cW$ if $\cQ_T$ converges weakly to $\mu$ as $T \to \infty$. That is, 
        \begin{equation}
            \lim_{T \to \infty} \int_{\cW} \phi(w)\, d\cQ_T(w) = \int_{\cW} \phi(w)\, d\mu(w)
        \end{equation}
        for all continuous functions $\phi \in \cC(\cW)$. 
        In particular, the trajectory statistics measure can be viewed as an integral of delta functions, where
        \begin{equation} \label{eq:def_of_QT_integral}
            \int_{\cW} \phi(w) \,d\cQ_T(w) = \frac{1}{T}\int_0^T \phi(g \circ x(t)))\, dt
        \end{equation}
        which is the common definition in literature~\cite{mathew2011metrics, miller2013trajectory}.
    \end{definition}
    Roughly speaking, a trajectory is ergodic with respect to the measure $\mu$ if it eventually (as $T \to \infty$) explores the workspace in a manner which is commensurate with $\mu$.
    This is formalized by requiring the measure $\cQ_T$, which quantifies the proportion of time the trajectory is contained in subsets $A \subset \cW$ during the interval $[0,T]$, to converge to $\mu$.
    Because $\cW$ is compact, weak convergence $\cQ_T \xrightarrow{T \to \infty} \mu$ is defined by
    \begin{equation}
        \lim_{T \to \infty} \int_{\cW} \phi(w)\, d\cQ_T(w) = \int_{\cW} \phi(w)\, d\mu(w)
    \end{equation}
    for all integrable functions $\phi \in C(\cW)$.
    However, since robots only run for finite time horizons, we must quantify the level of ergodicity of finite trajectories.

    \begin{definition} \textbf{Ergodic Cost Function.}
        Let $\mu$ be a probability measure on $\cW$. A \emph{$\mu$-ergodic cost function} is a function $\cE_\mu: \Paths(\cX) \to \R$ such that for an infinite trajectory $x(t): \R^+ \to \cX$, if $\cE_\mu(x|_{[0,T]}) \to 0$ as $T \to \infty$ then $x(t)$ is ergodic.
    \end{definition}
    To define the ergodic metric for trajectory optimization, we use spectral methods and construct a metric in the Fourier space~\cite{mathew2011metrics, scott2009capturing, miller2015ergodic}.
    \begin{definition} 
        \label{def:erg_metric} \textbf{Spectral Ergodic Cost Function.}
        Let $\mu$ be a probability measure on $\cW$.
        Let $\mathcal{K}^v \subset \mathbb{N}^{v}$ be the set of all integer $k$ fundamental frequencies that define the cosine Fourier basis function 
        \begin{equation} \label{eq:Fk}
            F_k(w) = \frac{1}{h_k} \prod_{i=0}^{v-1} \cos\left(\frac{w_i k_i \pi}{L_i}\right)
        \end{equation}
        where $h_k$ is a normalizing factor (see \cite{miller2015ergodic,mathew2011metrics}).
        For a finite trajectory $x(t):[0,T] \to \cX$, let $\cQ_T$ be the measure defined in Eq.~\eqref{eq:Q_T}.
        The spectral ergodic cost function is defined as
        \begin{align}\label{eq:ergodic_met}
            &\mathcal{E}_\mu(x) = \sum_{k\in \mathcal{K}^v} \Lambda_k \left( Q^k_T - \mu^k \right)^2 \\
            &= \sum_{k\in  \mathcal{K}^v} \Lambda_k \left( \frac{1}{T}\int_{0}^{T}F_k(g\circ x(t)) dt - \int_{\mathcal{W}} F_k(w)d\mu(w) \right)^2 \nonumber
        \end{align}
        where $\cQ_T^k$ and $\mu^k$ are the $k^\text{th}$ Fourier decomposition modes of $\cQ_T$ and $\mu$, respectively (using Eq.~\eqref{eq:def_of_QT_integral}), and
        $\Lambda_k=(1 + \left\Vert k \right\Vert_2)^{-\frac{v+1}{2}}$ is a weight coefficient that places higher importance on lower-frequency modes.
    \end{definition}
    In particular, the spectral ergodic cost function defined for a probability measure $\mu$ forms a metric~\cite{mathew2011metrics} which is able to generate coverage trajectories at arbitrary spacial scales~\cite{scott2009capturing} (which we additionally prove in Appendix~\ref{apx:ergodic_cost}).
    This is advantageous as 1) the metric defines a proper coverage distances using spectral modes in continuous space; and 2) specifies infinitely many ways to explore with respect to the spectral decomposition. Note that while the ergodic metric forms a non-convex objective in the trajectory space, the non-convexity specifies many trajectory solutions that robots can take leverage to improve exploration.
    The challenge is then optimizing for the distribution of feasible ergodic trajectory solutions.

    We can formulate an optimization problem over trajectories $x(t)$ through the ergodic metric. 
    Consider that $x(t)$ is constrained (either through dynamics~\eqref{eq:dynamics} or through some other differential constraints). 
    Then the ergodic trajectory optimization is given as 
    \begin{align}
        \min_{\substack{x(t) \in \Paths(\cX)  \\  u(t) \in \cU \, \forall t \in [0, T] }} & \cE_\mu(x(t)) \\ 
        \text{subject to } & h_1(x) = 0 \,\, \forall t \in [0, T] \nonumber \\ 
        & h_2(x) \le 0 \,\, \forall t \in [0,T] \nonumber
    \end{align}
    where initial conditions and input trajectory constraints $u(t)$ can be accounted for in the equality and inequality constraints $h_1, h_2$ , e.g., $\dot{x} = f(x, u)$ and $u \in \cU$.

\subsection{Stein Variational Gradient Descent} \label{sec:stein_var}
    A powerful method to perform probabilistic inference is variational inference (VI), where one aims to approximate a complex target distribution $p(x)$ with a candidate distribution $q(x)$ from a parameterized family of distributions $\cD$. This is typically achieved by minimizing the Kullback-Leibler (KL) divergence,
    \begin{align}
        q^* = \argmin_{q \in \cD} D_{\KL}(q ||p).
    \end{align}
    However, finding an appropriate family $\cD$ which balances accuracy with tractable computations is often challenging. Stein variational gradient descent (SVGD)~\cite{liu2016stein} addresses this issue by providing a nonparametric method to sample from the target distribution $p(x)$ via kernel methods and gradient descent. 

    The Stein variational method is derived by sampling a collection of points $\{x^i_0\}_{i=1}^N$ using a prior distribution on $\cX$, and then iteratively updating them by the gradient descent
    \begin{align} \label{eq:update_step}
        x^i_{r+1} = x_r^i + \epsilon \phi^*_{r}(x_r^i).
    \end{align}
    In particular, $\epsilon > 0$ is the step size, and $\phi^*_r: \cX \to \cT_\cX$ is the vector field which maximally decreases the KL-divergence at the $r^{th}$ step (not to be confused with fundamental frequency).

    Suppose $k: \cX \times \cX \to \R$ is a positive definite kernel, and $\cH$ is its corresponding reproducing kernel Hilbert space (RKHS). We restrict the vector field to be the function class $\cH^n$, and therefore
    \begin{align}
        \phi^*_r = \argmax_{\phi \in \cH^n} \{ - \nabla_\epsilon D_{\KL}(\hat{q}_r || p) \, : \, \|\phi\|_{\cH^n} \leq 1\},
    \end{align}
    where $\hat{q}_r = \frac{1}{N} \sum_{i=1}^N \delta_{x^i_{r}}$ denotes the empirical distribution at step $r$. Note that if the kernel $k$ is universal, $\cH^n$ is dense in the space of continuous functions $C(\cX, \R^n)$, and does not result in a loss of generality. Furthermore, from~\cite{liu2016stein}, we have an exact form for $\phi^*_r$,
    \begin{align} \label{eq:svgd_phi}
        \phi^*_r(\cdot) = \E_{x \sim \hat{q}_{r-1}} \left[ k(x,\cdot) \nabla_x \log p(x) + \nabla_x k(x,\cdot)\right]
    \end{align}
    which can be approximated by samples $\{x_r^i\}^N_{i=1}$
    \begin{align} \label{eq:approx_svgd_phi}
        \phi^*_r(\cdot) = \frac{1}{N} \sum_{n=1}^N k(x^i_r,\cdot) \nabla_x \log p(x^i_r) + \nabla_x k(x^i_r,\cdot).
    \end{align}
    Thus, the sampled points converge onto an approximation of the distribution $p$. 
    In this work, we leverage the Stein variational gradients to define and solve for distributions of ergodic trajectories for robot exploration.
    
\section{Ergodic Coverage as Inference} \label{sec:stein_var_erg}

    In this section, we pose the problem of computing ergodic trajectories as an inference problem over a distribution of paths.
    We then derive an algorithm for optimizing approximate distributions on ergodic paths and prove optimality of the posterior distribution of paths in the ergodic coverage inference problem.

    \subsection{The likelihood of ergodicity and Stein variations}
        In order to explore the landscape of trajectories which minimize an ergodic cost function, we will formulate the ergodic search problem in terms of variational inference.
        Adapting~\cite{lambert2021stein}, which formulated motion planning problems using variational inference, we introduce a binary optimality criterion $\cO: \Paths(\cX) \to \{0,1\}$. 
        Suppose $\mu$ is a probability measure on $\cW$.
        Then let use this optimality to encode a $\mu$-ergodic cost likelihood function $\cE_\mu$ by defining
        \begin{align}
            p(\cO | x) \coloneqq \exp(-\lambda \cE_\mu(x)),
        \end{align}
        where $\lambda > 0$ is a hyperparameter and we simplify the notation by using $\cO$ to denote the optimal condition $\{\cO = 1\}$. 
        Given a prior distribution $p$ on $\Paths(\cX)$, and a positive definite kernel $k: \Paths(\cX)^2 \to \R$ on path space, we can use SVGD to sample from the posterior distribution $p(x|\cO)$. In particular, by applying Bayes' rule, the optimal vector field from Eq.~\eqref{eq:svgd_phi} for a $\mu$-ergodic cost likelihood function $\cE_\mu$ is given by 
        \begin{multline} \label{eq:phi_star}
            \phi^*_r 
            = \E_{x \sim \hat{q}_{r-1}} [ k(x,\cdot) (\nabla_x \log p(x) - \\ \lambda \nabla_x \cE_\mu(x)) + \nabla_x k(x,\cdot)],
        \end{multline}
        where $x \in \Paths(\mathcal{X})$ and $p(x)$ is a prior over trajectories. 
        The main benefit of forming an ergodic trajectory optimization through Stein variational descent is the ability to optimize multiple trajectories in parallel. 
        
    \subsection{Ergodic Stein variational trajectory optimization}
        
        Instead of defining a distribution of infinite-dimensional trajectories, we discretize the path to facilitate numerical optimization. 
        Let $\bfx = [x_0, x_1, \ldots, x_{T-1}]$, where $x_t \in \mathcal{X}$, be a collection of points representing a trajectory along a discrete time horizon $T$  indexed by discrete time $t$ (which we purposefully overload the notation to keep consistent with the stated definitions). 
        Note that the definition of paths in Eq.~\eqref{eq:paths}, ergodicity in Def.~\ref{def:ergodicity}, and as a cost function Def.~\ref{def:erg_metric} are now calculated on discrete paths and still hold. 
        Next, let us define an empirical distribution over $N$ discrete paths $\bfx$ as $\hat{q} = \frac{1}{N} \sum_{i=1}^N \delta_{\bfx^i}$.
        Following Eq.~\eqref{eq:approx_svgd_phi}, we approximate $p(\bfx)$ using $\hat{q}$ where we assign $p(\bfx) = \mathcal{N}(\hat{\bfx}, \sigma^2)$ where $\hat{\bfx}=\text{interp}(x_\text{init}, x_\text{final})$, and $\sigma^2$ is the variance. 
        Given a kernel function $k(\bfx, \cdot)$ on discrete paths $\cX^T$, the ergodic Stein variational step is given by  
        \begin{multline} \label{eq:steinE_step}
            \phi^*_r(\cdot)
            = \frac{1}{N} \sum_{i=1}^N [ k(\bfx^i_r,\cdot) (\nabla_\bfx \log p(\bfx_r^i) - \\ \lambda \nabla_\bfx \cE_\mu(\bfx_r^i)) + \nabla_\bfx k(\bfx_r^i,\cdot)]
        \end{multline}
        where $-k(\bfx^i_r,\cdot)\lambda \nabla_\bfx \cE_\mu(\bfx_r^i)$ minimizes the ergodic cost over the trajectory sample, and $k(\bfx_r^i,\cdot)$ is a repulsive force that pushes trajectory solutions. 
        We can view the kernel as a similarity measure between two trajectories $\bfx, \bfx^\prime$.
        Specifically, as $\bfx \to \bfx^\prime$ so does $k(\bfx, \bfx^\prime) \to 1$ and  as $\Vert \bfx-\bfx^\prime \Vert \to \infty$, $k(\bfx, \bfx^\prime) \to 0$ (when choosing a radial basis kernel~\eqref{eq:rbf}).
        Thus, one can measure diversity of a set of particles $\{ \bfx^i \}$ using a kernel by forming a matrix with entries $K_{ij}=k(\bfx^i, \bfx^j)$, (where $K_{ii} = 1$) and computing the determinant of $K$ we can establish a measure of diversity. In particular, given a set of trajectories $\{ \bfx^i \}$, $\det(K) \to 0$ as $\bfx^i \to \bfx^j \, \, \forall i,j ,\,\, i\neq j$ and $\det(K) \to 1$ as $\Vert \bfx^i -\bfx^j \Vert \to \infty \, \, \forall i,j ,\,\, i\neq j$. 
        
        Constraints on the trajectory and additional terms that shape trajectory solutions can be added by extending the cost likelihood function in the following form 
        \begin{equation} \label{eq:likelihood}
            \cL_\mu(\bfx) = \cE_\mu(\bfx) + \rho(\bfx) + c_1 h_1(\bfx)^2 + c_2 \text{max}(0,h_2(\bfx))
        \end{equation}
        where $c_1, c_2$ are positive penalty weights\footnote{$c_1, c_2$ are chosen arbitrarily, but can be treated as Lagrange multipliers in an Augmented Lagrange formulation.} that form an inner product with equality and inequality functions $h_1, h_2$, and $\rho : \Paths(\cX) \to \mathbb{R}$ is any additional penalty terms on the trajectory $\bfx$. 
        We can rewrite the ergodic Stein variational step using $\cL_\mu$ as
        \begin{multline} \label{eq:steinE_step}
            \phi^*_r(\cdot)
            = \frac{1}{N} \sum_{i=1}^N [ k(\bfx^i_r,\cdot) (\nabla_\bfx \log p(\bfx_r^i) - \\ \lambda \nabla_\bfx \cL_\mu(\bfx_r^i)) + \nabla_\bfx k(\bfx_r^i,\cdot)].
        \end{multline}
        Selection of trajectories can be done through a heuristic $\argmax$ operation 
        \begin{equation} \label{eq:heuristic}
            i^\star = \argmax_i \exp(-\lambda \cL_\mu(\bfx_r^i)).
        \end{equation}
        In Alg.~\ref{alg:stein_erg_traj_opt} we outline the Stein variational ergodic trajectory optimization algorithm.

        \RestyleAlgo{ruled}
        \SetKwComment{Comment}{/*}{*/}
        \begin{algorithm}[t!]
        \caption{Stein Variational Ergodic Trajectory Opt.}\label{alg:stein_erg_traj_opt}
        \textbf{input:} {measure $\mu$, domain $\cW$, map $g: \cX \to \cW$, cost $\cL_\mu$, prior $p(\bfx)$, kernel $k(\bfx, \cdot)$, step size $\epsilon$, iteration $r=0$, initial trajectory samples $\{\bfx_0^i\}^N_{i=1}$, termination condition $\gamma$} \\ 
        \While{$\Vert \phi^\star_r(\bfx_r^i) \forall i \Vert \ge \gamma$ or $r<$ max iterations}{
            \For{ each sample $i$ in parallel}{
                $\bfx_{r+1}^i \gets \bfx_r^i + \epsilon \phi^\star_{r}(\bfx_r^i)$\;
            }
            $r \gets r + 1$
        }
        \textbf{return:}  $\{ \bfx_r^i\}_{i=1}^N$, $\argmax_i \exp(-\lambda \cL_\mu( \bfx_r^i))$
        \end{algorithm}

    \subsection{Ergodic Stein variational control}
        The ergodic Stein step Eq.~\eqref{eq:steinE_step} acts on discrete points that represent robot trajectories. 
        We can readily extend this formulation over discrete control inputs $\bfu = [u_0, u_1, \ldots, u_{T-1}]$. 
        Consider the discrete-time transition dynamics 
        \begin{equation}\label{eq:disc_dyn}
            x_{t+1} = F(x_t, u_t),
        \end{equation}
        then, given an initial condition $x_0$, and a sequence of controls $\bfu$, $\bfx$ is calculated through recursive application of Eq.~\eqref{eq:disc_dyn} starting from $x_0$. 
        Using this formulation, we can define the problem over control inputs $\bfu$ where we optimize over $p(\bfu | \cO)$ and by an approximate distribution over control sequences $\hat{q} = \frac{1}{N} \sum_{i=1}^N \delta_{\bfu^i}$.
        We can then optimize strictly over controls by using a kernel $k$ on discrete control paths $\cU^T$, where the Stein variational step is
        \begin{multline} \label{eq:steinE_step_control}
            \phi^*_r(\cdot)
            = \frac{1}{N} \sum_{i=1}^N [ k(\bfu^i_r,\cdot) (\nabla_\bfu \log p(\bfu_r^i) - \\ \lambda \nabla_\bfu \cE_\mu(\bfx_r^i | \bfu_r^i, x_0)) + \nabla_\bfu k(\bfu_r^i,\cdot)]
        \end{multline}
        where $\cE_\mu(\bfx_r^i | \bfu_r^i, x_0)$ is the ergodic metric of trajectory $\bfx$ given initial condition $x_0$ and control sequence $\bfu$.
        This form is useful in adaptive model-predictive control (MPC) where direct control values are replanned online.
        As with trajectory optimization, we can rewrite the Stein step using Eq.~\eqref{eq:likelihood} and introduce constraints on the control and trajectory as needed. 
        In most MPC formulations, after the control trajectory is optimized, it is common to pass to the robot the first control value and shift the sequence of controls, i.e., $u_{0:T-2} = u_{1:T-1}$ to warm start the optimization. 
        We follow~\cite{liu2016stein} for updating the controls. 
        We outline the Stein variational ergodic control in Alg.~\ref{alg:stein_erg_control}.
        
        \RestyleAlgo{ruled}
        \SetKwComment{Comment}{/*}{*/}
        \begin{algorithm}[t!]
        \caption{Stein Variational Ergodic Control}\label{alg:stein_erg_control}
        \textbf{input:} {initial state $x_0$, time horizon $T$, measure $\mu$, domain $\cW$, map $g: \cX \to \cW$, cost $\cL_\mu$, prior $p(\bfu)$, kernel $k(\bfu, \cdot)$, step size $\epsilon$, prior control samples $\{\bfu_0^i\}^N_{i=1}$, termination condition $\gamma$} \\ 
        iteration $r=0$\;
        \While{$\Vert \phi^\star_r(\bfu_r^i) \forall i \Vert \ge \gamma$ or $r<$ max iterations}{
            \For{ each sample $i$ in parallel}{
                $\bfu_{r+1}^i \gets \bfu_r^i + \epsilon \phi^\star_{r}(\bfu_r^i)$\;
            }
            $r \gets r + 1$
        }
        \textbf{return:}  $\{ \bfu_r^i\}_{i=1}^N$, $i^\star = \argmax_i \exp(-\lambda \cL_\mu( \bfu_r^i))$ \;
        apply $u_0^{i^\star}$ to robot\; 
        \Comment*[l]{shift controls}
            \For{ each sample $i$ in parallel}{
                $u_{0:T-2}^i \gets u_{1:T-1}^i$\;
            }
        \Comment*[l]{sample state and return to input}
        \end{algorithm}

   \begin{figure}
        \centering
        \includegraphics[width=\linewidth]{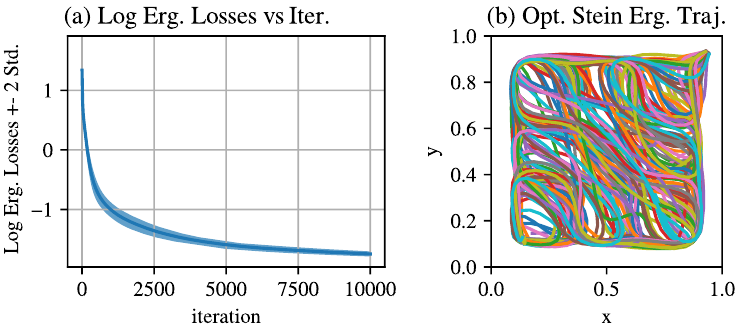}
        \caption{\textbf{Stein Ergodic Solutions Converge to Similar Ergodicity.} (a) Log of mean ergodic losses for $50$ trajectories optimized by the Stein variational ergodic search on a uniform $\mu$. Note the tight bound of 2-standard deviation on the ergodic losses suggests trajectory solutions are close in optimality. (b) Overlap of $50$ ergodic trajectories that provide uniform coverage over the $1m \times 1m$ domain.}
        \label{fig:convergence}
    \end{figure}
       \begin{figure}
            \centering
            \includegraphics[width=\linewidth]{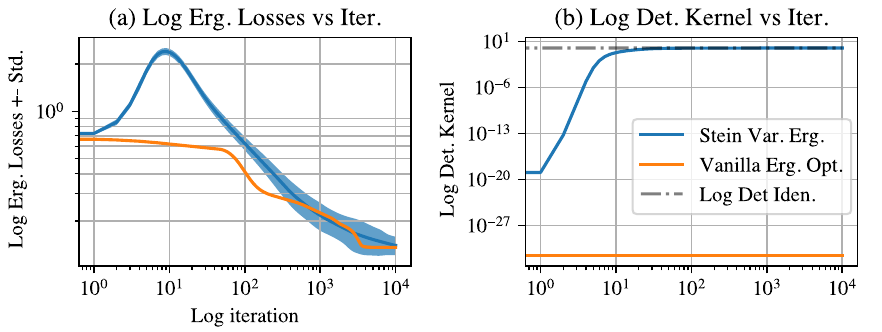}
            \caption{\textbf{Measured Kernel Ergodic Trajectory Diversity.} Illustrated is a comparison of trajectory diversity using the Stein variation ergodic approach and the vanilla ergodic trajectory optimization. (a) Ergodic losses for $20$ trajectories using the proposed Stein variational ergodic method and the vanilla ergodic Trajectories. (b) Trajectory diversity measured using the determinant of the RBF matrix kernel $K_{ij}=k(\bfx^i, \bfx^j)$, (where $K_{ii} = 1$ and $k$ is given by Eq.~\eqref{eq:rbf}) with a fixed $h=0.01$. The initial sampled trajectories are drawn from a zero-mean Gaussian distribution with $\sigma^2=0.01$. Values approaching $1$ suggest diversity of trajectories (the Kernel matrix is strongly diagonal with values $1$). Smaller values suggest similar trajectories (the Kernel matrix consists of identical values).  }
            \label{fig:diversity}
        \end{figure}

    \subsection{Kernel design for Ergodic trajectories} 
            Exploration requires reasoning about long time horizons to promote effective coverage.
            This poses a problem as Stein variational methods tend to perform poorly in high-dimensional problems, e.g., $T=100$ for $n=2$ results in $200$ dimensional inference problem~\cite{zhuo2018mpstein}. 
            The repulsive terms that provide the diversity in solutions vanishes at higher dimensions (due to the derivative of the kernel diminishing with respect to its input). 
            The most common way to effectively deal with the computational issues is by choosing a kernel of the form 
            \begin{equation}\label{eq:markov_kernel}
                k(\bfx, \bfx^\prime ) = \sum_t k(x_t, x_t) + \sum_{(t, t^\prime) \in \mathcal{G}} k(x_t, x_t^\prime)
            \end{equation}
            where $\mathcal{G}$ is a graph of connected points separated by time and $k(\bfx, \bfx^\prime)$ is a sum of positive semi-definite kernels that leverages the Markov property of state trajectories~\cite{lambert2021stein} which we refer to as a Markov Kernel.
            As with prior work, a good choice of kernel is the smooth radial basis function (RBF)
            \begin{equation} \label{eq:rbf}
                k(x, x^\prime) = \exp\left(- \Vert x - x^\prime \Vert_2^2 / h \right)
            \end{equation}
            where $h = \text{med}\left( \{ \bfx \}\right)/ \log N$ is a heuristic that takes the median of the particles. 
            It is possible to choose the kernel to be a composition, e.g., a Markov RBF kernel, which we evaluate the effect of kernels on trajectory diversity in Section~\secref{sec:results}. 

            In this work, we leverage the multiscale aspect of ergodicity and the spectral ergodic metric and define the kernel and trajectories in the workspace $\cW = [0, L_0]\times[0, L_1], \ldots \times [0, L_{v-1}]$ where $L_i$ can be arbitrarily chosen to satisfy numerical conditioning, e.g., $L_i=1$, and $\forall w \in \cW, w\in [0,1]^v$.
            Given an invertible and linear map $g : \cX \to \cW, g^{-1} : \cW \to \cX$, we can rewrite the trajectories in terms of the composition $\bfw = g \circ \bfx = [g(x_0), \ldots, g(x_{T-1})]$ which significantly improves numerical conditioning of the kernel without loss of generality.  
            Note that if $k(x, x^\prime) = 1 \, \forall x, x^\prime \in \Paths(\cX)$ then the Stein variational ergodic optimization problem reduces to a parallel trajectory optimization over random samples $\{ \bfx^i \}_{i=1}^N$. 
            Other choices can provide better options, but in this work we only consider the smooth RBF and the Markov RBF kernel. 

    \subsection{Convergence}

    The convergence of SVGD has been actively studied, primarily in the population limit with infinitely many samples~\cite{liu2017stein, korba2020non-asymptotic, salim2022convergence}. 
    A non-asymptotic analysis of convergence was carried out in~\cite{korba2020non-asymptotic}, and finite sample bounds for propagation of chaos have also been considered~\cite{shi2022finite-particle}. We show in Appendix~\ref{apx:convergence} that SVGD converges in the present setting of ergodic search by applying~\cite[Corollary 6]{korba2020non-asymptotic}.

    \begin{theorem}[Informal]
        Given the assumptions in this section and in the infinite particle limit, SVGD converges, where the gradient update steps are bounded by
        \begin{align}
            \frac{1}{r} \sum_{i=1}^r \|\phi^*_i\|^2_{\cH^{Tv}} \leq \frac{\KL(p||\pi)}{c_\epsilon r},
        \end{align}
        where $p$ is a smooth prior distribution and $c_\epsilon > 0$ is a constant.
    \end{theorem}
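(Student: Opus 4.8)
The plan is to reduce the statement to a direct application of \cite[Corollary 6]{korba2020non-asymptotic}, so that the real work lies in verifying that the ergodic inference problem satisfies that corollary's hypotheses. First I would fix the target to be the posterior $\pi(\bfx) \propto p(\bfx)\exp(-\lambda \cE_\mu(\bfx))$ on the discretized path space $\cW^T \cong \R^{Tv}$, and record the standard identity from \cite{liu2016stein, liu2017stein} that the squared RKHS norm of the optimal update direction in Eq.~\eqref{eq:svgd_phi} equals the kernel Stein discrepancy, $\|\phi^*_r\|^2_{\cH^{Tv}} = \KSD^2(q_r, \pi)$, where $q_r$ denotes the population-limit distribution at step $r$ (the infinite-particle analogue of $\hat q_r$). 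With this identification, the claimed bound is exactly the averaged-KSD convergence rate of Korba et al.

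Next I would invoke the population-limit descent lemma: under the corollary's assumptions there exist a step size $\epsilon$ and a constant $c_\epsilon > 0$ (of the form $c_\epsilon = \epsilon(1-\tfrac{\epsilon\beta}{2})$ for $\epsilon$ small relative to the Lipschitz constant $\beta$ of the score) such that one SVGD step obeys $\KL(q_{r+1}\|\pi) \le \KL(q_r\|\pi) - c_\epsilon \|\phi^*_r\|^2_{\cH^{Tv}}$. Summing this telescoping inequality from $i=1$ to $r$, discarding the nonnegative terminal KL term, and dividing by $r$ gives $\frac{1}{r}\sum_{i=1}^r \|\phi^*_i\|^2_{\cH^{Tv}} \le \KL(q_0\|\pi)/(c_\epsilon r)$ with $q_0 = p$ the smooth prior, which is precisely the stated bound.

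The substantive step is checking the three structural hypotheses of the corollary for our likelihood. For integrability I would note that $\cE_\mu \ge 0$ forces $\exp(-\lambda\cE_\mu)\le 1$, and since the Gaussian prior $p(\bfx)=\mathcal{N}(\hat\bfx,\sigma^2)$ is integrable, $\pi$ is a well-defined probability measure. For regularity of the score $\nabla_\bfx \log \pi = \nabla_\bfx \log p - \lambda \nabla_\bfx \cE_\mu$, I would use that $\cE_\mu$ is a finite $\Lambda_k$-weighted sum over $k \in \mathcal{K}^v$ of squared differences of the smooth, bounded Fourier coefficients $F_k$ evaluated along the trajectory; because each $F_k$ and its derivatives are bounded on the compact workspace $\cW$, the gradient and Hessian of $\cE_\mu$ are bounded and Lipschitz, and together with the quadratic log-Gaussian term this yields the required bounded-Hessian condition on $\log\pi$. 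Finally, the RBF kernel of Eq.~\eqref{eq:rbf} and its Markov variant are bounded with bounded first and second derivatives and lie in the Stein class of $\pi$, satisfying the kernel hypotheses.

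The hard part will be the score regularity: controlling the Hessian of the ergodic cost uniformly over the entire trajectory. The squaring of the time-averaged Fourier modes couples all $T$ discretization points, so I would need the mixed second derivatives $\partial^2 \cE_\mu / \partial x_s \partial x_t$ to stay bounded with only controlled $T$-dependence (folded into $c_\epsilon$), leveraging the boundedness of $F_k$, $\nabla F_k$, $\nabla^2 F_k$ and the decay of $\Lambda_k$ that suppresses high-frequency modes. A secondary subtlety is that the hypotheses of \cite{korba2020non-asymptotic} are typically stated on $\R^d$ whereas our natural domain is the compact $\cW$; I would address this either by using a compactly-supported smooth prior or by verifying the tail conditions directly, observing that compactness of $\cW$ renders all boundedness requirements immediate once the score regularity above is in hand.
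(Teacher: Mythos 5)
Your proposal is correct and follows essentially the same route as the paper's proof: both reduce the statement to \cite[Corollary 6]{korba2020non-asymptotic} and verify its hypotheses --- kernel boundedness, well-defined bounded Hessian of the potential $V = \lambda\cE_\mu - \log p$, and uniformly bounded KSD/moments along the iterates --- using smoothness of the Fourier basis functions $F_k$ together with compactness of the discretized path space. The ``hard part'' you anticipate (uniform control of the mixed second derivatives of $\cE_\mu$ across time points) dissolves exactly as your closing remark suggests: on the compact domain a smooth potential automatically has bounded Hessian, which is how the paper disposes of assumption $(A_2)$ in a single line, and the remaining moment condition for $(A_3)$ is likewise immediate because the iterate measures are probability measures on a bounded set.
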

    \begin{proof}
        See Appendix~\ref{apx:convergence} for formal proof.
    \end{proof}
    Convergence of SVGD subject to an ergodic metric is beneficial towards guaranteeing trajectory solutions can be acquired. 
    In the following section we provide analysis and empirical results for our proposed approach. 
            
       \begin{figure}
            \centering
            \includegraphics[width=\linewidth]{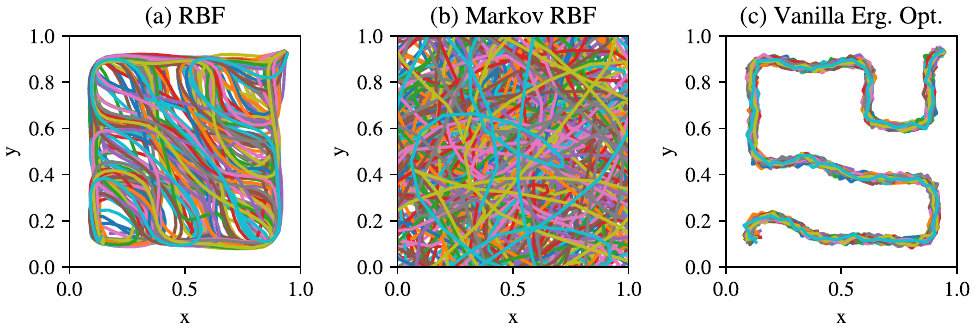}
            \caption{\textbf{Effect of Kernel on Ergodic Trajectory Diversity .} Trajectories are optimized to provide uniform coverage over domain. All $50$ initial trajectory particles are initialized as an interpolating function between initial and final poses additive zero mean noise with $\sigma^2 = 0.01$. Each (a) Radial basis function (RBF) kernels produce smooth diverse paths. (b) Using Eq.~\eqref{eq:markov_kernel} which relies on the Markov property of trajectories $x(t)$, uniform ergodic trajectories can be arbitrarily diverse. (c) Setting the kernel $k(x_i,x_j) = 1$ reduces the Stein ergodic gradient descent to parallel gradient descent on independent trajectory particles $\bfx_i$. Note that particles collapse on a single ergodic path. }
            \label{fig:kernel_anal}
        \end{figure}

        \begin{figure*}[ht!]
            \centering
            \includegraphics[width=\textwidth]{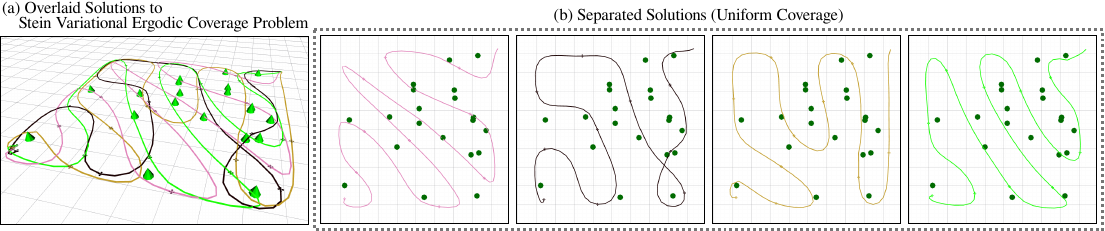}
            \caption{\textbf{The many ways to explore a forest.} Here, we demonstrate the many different solutions to uniform exploration of a $100m \times 100m$ forest that our proposed method produces. (a) Trajectories are optimized through the Stein variational ergodic approach starting from the same initial conditions. (b) Each solution is shown to be a locally optimal solution to the ergodic metric that provides uniform coverage which can be used for robust environmental monitoring.}
            \label{fig:sim_cluttered_search}
        \end{figure*}
        
\section{Results} \label{sec:results}

    In this section, we provide analysis and validation of our proposed approach. 
    Specifically, we are interested in addressing the following:
    \begin{enumerate}
        \item How diverse are Stein variational ergodic trajectories?
        \item Do optimized trajectories yield similar ergodic losses?
        \item How does the choice of kernel effect solutions?
        \item Can the approach handle additional constraints?
        \item Can the proposed approach adapt exploration in a real-time control setting?
    \end{enumerate}
    Additional information regarding parameters, kernels, assumptions, and implementation details are provided in Appendix~\ref{apx:details}. 
    Multimedia for the results can be found in the supplementary material.

    \subsection{Convergence and Trajectory Diversity}
        First, we are interested in evaluating whether the proposed Stein variational ergodic trajectory optimization method converges and provides diverse trajectories.
        In particular, we are interested in empirically evaluating how ergodic trajectories can be under a Stein variational approach.  
        We consider the case of Alg.~\ref{alg:stein_erg_traj_opt} in a 2-D planar uniform coverage problem in a bounded domain (see Fig.~\ref{fig:convergence}).

        The trajectories are constrained such that each consecutive time step is regularized (ensuring smoothness of the paths). 
        In addition, we bound the trajectories within the $\cW = [0,1]^2$ domain through a barrier function (see Appendix~\ref{apx:details}). 
        We evaluate a discrete sample of $N=50$ trajectories of length $T=100$ (initialized with a mean trajectory interpolating from some initial point to a final point on $\cW$ with additive zero mean Gaussian noise with variance $\sigma^2=0.01$. 
        We set $\mu = 1 \, \forall w \in \cW$, i.e., a uniform distribution, to specify the ergodic metric.

        \vspace{0.5em}
        \noindent
        \textbf{Stein Ergodic Trajectory Convergence.}
        As demonstrated in Fig.~\ref{fig:convergence}, the proposed Stein variational ergodic trajectory optimization converges each of the $50$ trajectories towards a minimum ergodic loss (shown in log form).
        Interestingly, the standard deviation of the ergodic losses for the trajectory remains significantly minimal (where we show $2$ standard deviation in Fig.~\ref{fig:convergence}). 
        Visual inspection of the trajectories in Fig.~\ref{fig:convergence} demonstrates the diversity in the optimized solutions which produce uniform coverage over the domain. 
        This result emphasizes that synergy between the spectral ergodic metric and the Stein variational gradient descent, offering the effectiveness of non-parametric optimization in approximating a posterior $p(\bfx | \cO)$ over trajectories, and the ergodic metric producing complex trajectories that facilitates discovery of many exploration strategies. 

       \begin{figure}
            \centering
            \includegraphics[width=\linewidth]{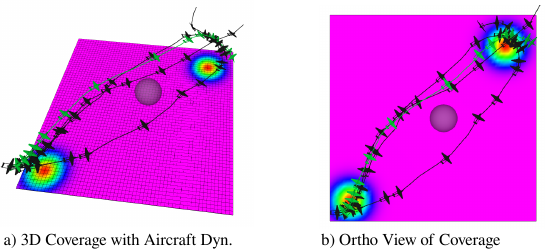}
            \caption{\textbf{3D Diverse Coverage with Aircraft Dynamics.} Here, we show that our proposed Stein ergodic controller (Alg.~\ref{alg:stein_erg_control}) can take into consideration nonlinear aircraft dynamics in a 3D search environment with an obstacle. The underlying distribution represent $\mu$ as a bimodal distribution with two peaks projected on the 2D plane. The spherical obstacle is placed between the two modes for which the controller has to construct $4$ trajectories to explore around the obstacle while spending most of its time around the two peaks. }
            \label{fig:aircraft}
        \end{figure}

       \begin{figure*}[ht!]
            \centering
            \includegraphics[width=\textwidth]{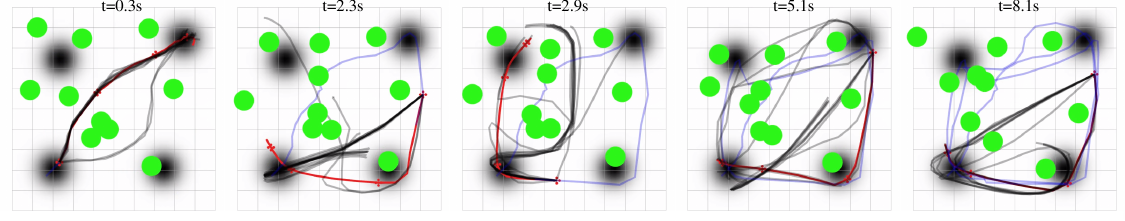}
            \caption{\textbf{Stein Variational Ergodic Model-Predictive Control with Dynamic Obstacles.} Here, we demonstrate the adaptiveness of our proposed method in an online exploration problem with dynamics obstacles. The goal is to visit the dark areas (defining $\mu$) while avoiding the dynamic obstacles (green) which move based on zero mean, $\sigma=0.1$ normally distributed velocity inputs which is unknown to the controller (only instantaneous position of obstacles are known). Trajectory planning (total of $12$ plans are optimized) is calculated using single-integrator dynamics with control constraints so a drone can track the inputs. The red line indicates the best possible solution according to our heuristic in Eq.~\eqref{eq:heuristic}, black lines indicate planned path, and blue line indicates visited areas. The proposed Stein variational ergodic controller optimizes several paths that can be used as redundancies in case plans become infeasible due the dynamic environment.}
            \label{fig:mpc}
        \end{figure*}

       \begin{figure*}[ht!]
            \centering
            \includegraphics[width=\textwidth]{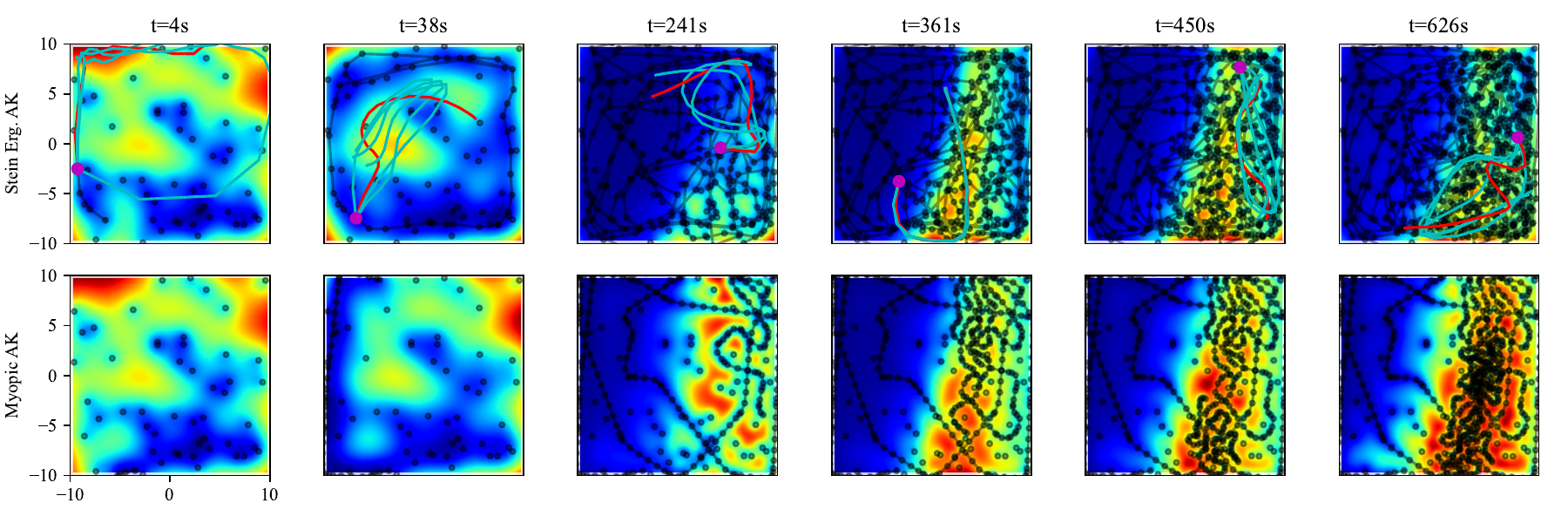}
            \vspace{-2em}
            \caption{\revision{\textbf{Stein Variational Ergodic Sensor-based Exploration with Attentive Kernel.} Here, we demonstrate our approach can readily integrate sensors and information measures is this sonar-based mapping task~\cite{Chen-RSS-22}. The target distribution $\mu$ is specified by the entropy of the attentive kernel (AK) Gaussian process model (developed in~\cite{Chen-RSS-22}) as data is collected (dark circles). A total budget of $700$ points with an initial $50$ provided to initialize model fitting of the underlying terrain (the N17E073 described in~\cite{Chen-RSS-22}). Sensor location plans (total of $4$ plans shown in cyan where the red plan is chosen by a heuristic) are optimized in receding horizon at a $1$ Hz rate for $20s$ into the future. The sensor location is tracked by a Dubins car dynamical system which commits a sensor measurement at the magenta point. We compare against the myopic planner used in~\cite{Chen-RSS-22} where red areas have high entropy and blue areas have low entropy. The Stein variational ergodic approach spreads out the sample points while planning several strategies for maneuvering the sensor location. }}
            \label{fig:sensor_based_exploration}
        \end{figure*}

        \vspace{0.5em}
        \noindent
        \textbf{Stein Kernel Effect on Ergodic Trajectories.}
        Analyzing the kernel gives us more insight as to its effect on producing diverse trajectories. 
        In Fig.~\ref{fig:kernel_anal}, we evaluate the proposed approach in Alg.~\ref{alg:stein_erg_traj_opt} on three kernels, the RBF kernel~\eqref{eq:rbf}, the Markov RBF~\cite{lambert2021stein}~\eqref{eq:markov_kernel}, and a kernel $k(x^i, x^j) = 1$ that reduces the Stein variational gradient to independent parallel gradient descent on a set of trajectories, i.e., a benchmark on the canonical vanilla ergodic trajectory optimization.
        We configure the problem as we did in the convergence analysis presented in Fig.~\ref{fig:convergence} where $\mu$ is a uniform distribution and the trajectories (totalling $50$) are regularized to be Markovian. 
        Each kernel is optimized until convergence of the ergodic metric (based on the gradient condition as described in Alg.~\ref{alg:stein_erg_traj_opt}).  
        We find that the Markov RBF kernel produced significantly more diverse trajectories than the RBF kernel alone.
        This is expected as the Markov RBF kernel repels pair-wise trajectory points where RBF produces compares whole trajectories (which results in more regular trajectories). 
        Compared to the base vanilla ergodic trajectory optimization, the proposed Stein variational ergodic approach significantly outperforms the benchmark, generating a diverse set of ergodic trajectories. 
        Under the same initial conditions, the benchmark ergodic trajectory optimization produces the same ergodic trajectory, resulting in mode-collapse over the posterior and further supporting the need for the proposed Stein variational ergodic method. 

        \vspace{0.5em}
        \noindent
        \textbf{Stein Ergodic Trajectory Diversity.}
        We can see the diversity of the trajectories more clearly in Fig.~\ref{fig:diversity} where we compare the Stein variational ergodic approach with the vanilla ergodic metric. 
        The kernel is given by Eq.~\eqref{eq:rbf} with $h=0.01$. 
        As seen in in Fig.~\ref{fig:diversity}, both methods converge towards an ergodic trajectory local minima. 
        The Stein variational approach produces a wider range of local minima (based on standard deviation) and initially produces worse ergodicity (due to the trajectory diversification). 
        The initial bump in the Stein variational ergodic losses are due to the initial trajectory diversification, as shown in Fig.~\ref{fig:diversity} (b).  
        The diversity measure is calculated using the determinant of the kernel matrix $K_{ij} = k(\bfx^i, \bfx^j)$ which converges on a diverse set of trajectories $\det(K)=1$. 
        Interestingly, this behavior does not influence the rate of change in the ergodic loss as both the Stein variational ergodic approach and the vanilla ergodic trajectory optimization converge onto the same levels of ergodicity within the same number of iterations. 
        The trajectories calculated from parallel ergodic trajectory optimizations (without the repulsive force) collapses onto a single local minima which results in $\det(K)=0$ and non-unique trajectory solutions.

      \begin{figure}
            \centering
            \includegraphics[width=\linewidth]{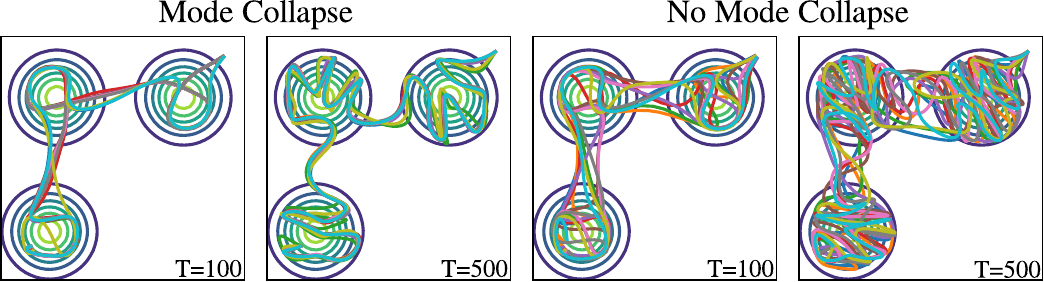}
            \caption{\revision{\textbf{Stein Variational Mode Collapse.} Here, we demonstrate mode collapse of Stein variational inference, i.e., when the repulsive force reduces to zero. (Left) Ergodic trajectories randomly initialized converge to similar solutions when the repulsive force from Stein variational is set to zero. (Right) When the repulsive force is non-zero, the trajectory solutions leverage the multimodal nature of the ergodic metric, yielding diverse solutions. Mode collapse can occur with increased dimensionality of the optimization variables which can be mitigated with the right choice of kernel~\cite{zhuo2018mpstein}. }}
            \label{fig:mode_collapse}
        \end{figure}

    \subsection{Large-Scale and Constrained Exploration} 

        Next, we consider real-world problem scenarios where exploration occurs in a constrained environment (with obstacles and dynamic constraints).
        Our first examples looks at generating diverse exploratory trajectories in a simulated forest. 
        The goal is to generate uniform coverage over a larger domain ($100m \times 100m$) and demonstrate the flexibility of the proposed approach when dealing with arbitrary scales, and introduce several obstacles representing trees.
        The trajectories are constrained to be Markovian (via a penalty term, see Appendix~\ref{apx:details}) to match velocity constraints that are feasible for a drone to follow. 
        The trajectories are solved over seconds to indicate the larger-scale aspect of the example, and are required to start and end at the same point (to further emphasize the diversity of solutions). 
        The trees are represented as disk penalty terms that encompass the width of the tree (and is known at optimization time). 

        \vspace{0.5em}
        \noindent
        \textbf{Multiscale Coverage in a Forest. }
        As illustrated in Fig.~\ref{fig:sim_cluttered_search} is an example of $4$ solutions to the Stein ergodic trajectory optimization in the cluttered forest environment. 
        Note that the initial and final positions are same for each trajectory (due to the cost-terms, see Appendix~\ref{apx:details}), and yet the proposed approach is capable of generating highly diverse solutions (with just $4$ trajectory samples). 
        In addition, the computation of the trajectory incurred no additional cost as the function map $g$ was the only element that was altered (other than the cost terms). 
        As a result, the trajectories were produced with relative ease owing to the synergy between the ergodic metric with Stein variational methods, especially in terms of problem scale. 
        
       \begin{figure}
            \centering
            \includegraphics[width=0.9\linewidth]{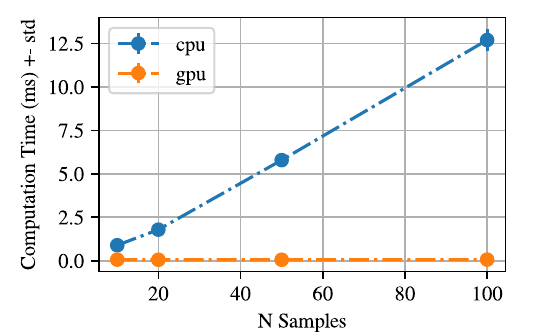}
            \caption{\textbf{Computational Analysis of Stein Variational Ergodic Search.} Here, we show the computational scaling of the Stein variational ergodic search gradient step as a function of the number of particles $N$. Note that with the CPU (AMD Threadripper 3960X), the computational scales linearly. With GPU computation (NVIDIA RTX 3080) we get constant computational scaling (between $57$ and $66 \mu s$). The additional parallelization is afforded by the spectral composition of ergodic search which further improves GPU computation. }
            \label{fig:comp}
        \end{figure}

        \begin{figure}
            \centering
            \includegraphics[width=\linewidth]{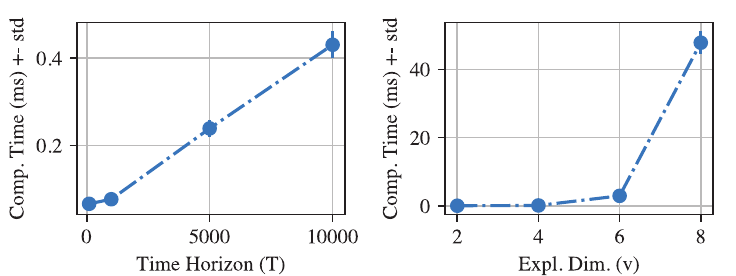}
            \caption{\revision{\textbf{Scale Analysis of Stein Variational Ergodic Search.} Here, we show the computational scaling of the Stein variational ergodic search gradient step as a function the time horizon $T$ and the exploration dimensionality $v$. Note that due to the added parallelization, time horizon scales computation approximately linearly. With increasing exploration dimensionality, computation increases quadratically (which is common to most ergodic exploration methods~\cite{miller2015ergodic, scott2009capturing}). Both examples are done with the same number of basis functions per dimensions.  }}
            \label{fig:comp2}
        \end{figure}
        
       \begin{figure*}[ht!]
            \centering
            \includegraphics[width=\textwidth]{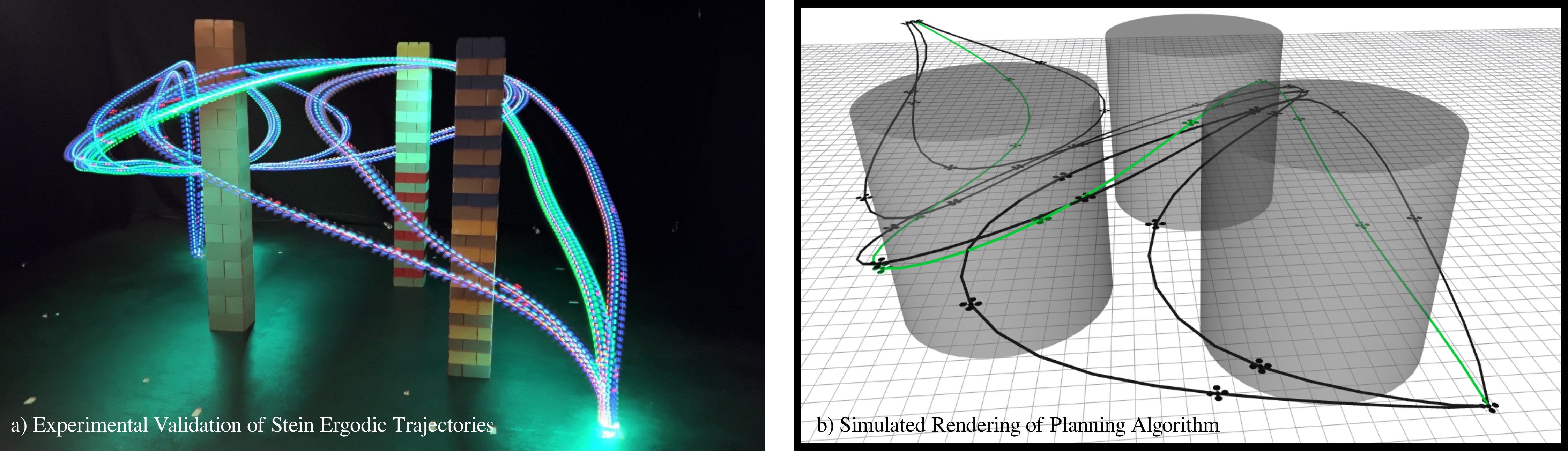}
            \caption{\textbf{3D Coverage Drone Experiment.} Here, we show experimental validation of the proposed Stein variational ergodic search. The trajectories are optimized ($4$ total with $T=50$) to be uniformly ergodic in the domain with three obstacles known to the solver. The green trajectory is the most optimal according to the heuristic~\eqref{eq:heuristic}. (a) Trajectory constraints satisfying the 3D velocity limits are used to produce feasible paths for the Crazyflie 2.1 Drone to follow. (b) Rendering of Stein variational ergodic trajectory optimization through the three obstacles. Note that the trajectories appear symmetric about the first obstacle, but are forced to be diverse through the Stein variational kernels which produce unique trajectories. }
            \label{fig:exp_cluttered_search}
        \end{figure*}
        
        \noindent
        \textbf{Diverse Coverage with Nonlinear Dynamics. } We additionally evaluate our proposed method to generate diverse coverage with nonlinear robot dynamic constraints (see Fig.~\ref{fig:aircraft}). 
        The dynamics of the robot are given by a nonlinear aircraft model (see Appendix~\ref{apx:details}). 
        The trajectory length is given by $T=150$ with $dt=0.1s$ and the Markov RBF kernel~\eqref{eq:markov_kernel} was used in this example. 
        We use the Stein variational ergodic controller (Alg.~\ref{alg:stein_erg_control}) to produce a distribution of diverse exploratory trajectories that avoids the spherical obstacle in the middle. 
        The generated trajectories leverage the nonlinear aircraft dynamics to go around the obstacle and avoid collision while slowing down around the bimodal peaks to generate coverage.

    \subsection{Online Planning and Adaptive Exploration} 

        Last, we explore the capability of the proposed approach for online planning and adaptive exploration through the use of the Stein variational ergodic model-predictive controller (Alg.~\ref{alg:stein_erg_traj_opt}). 
        In particular, we are interested in leveraging the parallel nature of the proposed approach for exploration to efficiently compute exploration strategies in dynamic environments. 
        We evaluate the proposed controller as a receding-horizon planner, which recomputes control strategies based on new information, in a exploration problem with dynamic obstacles. 
        The obstacles are assumed to be observable after they move (and thus only their positions are known to the solver). 
        We plan with a fixed time horizon of $T=20$ steps starting from the robot's current initial condition and plan using $12$ trajectories with an RBF kernel. 
        The dynamics are given by a single-integrator system with velocity limits that match the Crazyflie 2.1 drone~\cite{crazyswarm}. 
        The underlying information measure $\mu$ is given by a mixture of $4$ Gaussians placed asymmetrically over a bounded 2-D domain. 

        \vspace{0.5em}
        \noindent
        \textbf{Online Planning and Control.}
        In Fig.~\ref{fig:mpc} we demonstrate the proposed receding-horizon Stein variational ergodic controller. 
        Illustrated is a time-series snapshot of the controller producing several paths at various stages that avoids obstacles and provides alternative routes to explore the high-density areas. 
        The selected path (in red) is based on the maximum of the cost likelihood model at the given planning time. 
        As a result, it is typical that the approach switches between coverage strategies based on the immediate information obtained form observing the environment. 
        
        Interestingly, we find the proposed control approach is still able to reason about complex trajectories that visit multiple modal peaks within a single path. 
        This is an exceptionally advantageous capability that provide within-trajectory diversity without resorting to heuristics or sample-based augmentations e.g., control randomness~\cite{williams2017model}. 
        This can be seen through the smoothness of the planned trajectory paths (note in this example a trajectory smoothness penalty was not used as the underlying dynamic constraints naturally induce Markov trajectories).
        Furthermore the proposed approach visited each of the modal peaks within a $10s$ run of the controller, even when the obstacles cluttered around one another and cut off certain paths. 
        \revision{
        The proposed controller generates coverage trajectories around the free space proportional to the utility of information in that space. 
        In this scenario, the trajectories will remain in the collision-free space until a path opens, leading to a fast traversal as the ergodic metric will penalize staying in the same area for too long. 
        }

        \vspace{0.5em}
        \noindent
        \revision{
        \textbf{Sensor-based Exploration. } We also demonstrate how our proposed approach can readily incorporate sensors and information measures to guide exploration. 
        A sonar-based mapping problem as described in~\cite{Chen-RSS-22} is used to demonstrate this added capability. 
        The goal is to map an environment using a sonar sensor (see~\cite{Chen-RSS-22} Fig. 7). 
        We adopt the Attentive Kernel (AK) model where we compare against the myopic, sample-based planner that was used for finding where to sample data (as described in~\cite{Chen-RSS-22}). 
        The information measure is defined as the entropy of the Attentive Kernel Gaussian process and is inserted into the Stein variational MPC as the utility measure $\mu$. 
        A total of $1000$ samples are used to compute the Fourier transform of $\mu$ numerically based on the attentive kernel model (as done with the myopic planner). 
        The same experiment settings are maintained where we choose sample locations based on the exploration path determined by the Stein variational ergodic approach (with a budget of 700 data points). 
        Standard Gaussian process metrics are used to compare the quality of the Gaussian process as a function of data like, standardized mean square error (SMSE), mean standardized log loss (MSLL), root-mean-square error (RMSE), mean negative log-likelihood (MNLL), and mean absolute error (MAE) as done in~\cite{Chen-RSS-22}. 

        Empirical results for the resulting Gaussian process metrics are reported in Table~\ref{table:sensor_explr}. 
        The results are averaged over 5 runs of each algorithm subject to the same random seed. 
        Note that the proposed Stein variational ergodic approach improves the each metric without explicit fine-tuning to the sensor. 
        As demonstrated in Fig.~\ref{fig:sensor_based_exploration}, the proposed Stein variational ergodic approach generates several exploration plans that plan useful data acquisition across multiple high-information peaks. 
        In contrast, the myopic planner sets goal points based on a single plan but does not consider the effect collecting the data may have on the overall information landscape. 
        As a result, how data points are distributed is significantly different across methods as our proposed approach spreads out data proportional to the entropy measure rather than densely sampling in only high-entropy areas. 
        It is worth noting that our approach is not specific to the exact sensor or how we define information. The only restriction is that we are able to at least sample from the information distribution to compute the Fourier transform.  
        }

        \begin{table}
        \centering
        \caption{Standard Gaussian Process Metrics for Training Data using Stein Erg. and Myopic Planners (lower is better).}

        \begin{tabular}{*6c}    \toprule
         & SMSE & MSLL & NLPD & RMSE & MAE \\\midrule
         Stein. Erg.  & 0.0669 & -1.6556 & 4.0504 & 16.6373 & 11.6971 \\ 
         Myopic Plan. & 0.0765 & -1.6498 & 4.1785 & 17.7783 & 12.2943\\\bottomrule
         \hline
        \end{tabular}
            \label{table:sensor_explr}
        \end{table}

        \vspace{0.5em}
        \noindent
        \textbf{Computational Analysis. } We additionally analyze the computational time of the proposed method. 
        In Fig.~\ref{fig:comp}, we find that CPU-based computation (using an AMD Threadripper 3960X) and acceleration through JAX~\cite{jax2018github} we get linear computational scaling with respect to the number of samples (holding the time horizon $T=100$ constant). 
        With GPU-based computation and JAX (using an NVIDIA RTX 3080) we get constant scaling (with the worse computation time of $67 \mu s$. 
        The increase in computation is as a result of the spectral nature of the ergodic metric. 
        Specifically, one can additionally distribute the ergodic metric and gradient computation quite effectively which further improves the computation time, allowing for real-time control.

        \revision{
        Furthermore, we provide a brief computational complexity analysis of our methods; in particular for one SVGD step described in Eq.~\eqref{eq:steinE_step}, where we assume the paths $\bfx$ are valued in an exploration space of dimension $v$. This Stein variational step requires the computation of both the kernel matrix $k(\bfx_r^i, \bfx_r^j)$, which has a cost of $\mathcal{O}((NvT)^2)$, and the gradient $\nabla_\bfx \log p(\bfx_r^i)$, which requires the computation of $N$ gradients on an $vT$-dimensional space. We must also compute the spectral ergodic cost function, which is common to all ergodic search methods. For $k$ basis functions in each dimension, the ergodic cost function from Eq.~\eqref{eq:ergodic_met} has a complexity of $\mathcal{O}(NTk^v)$. We emphasize that all of these computations are parallelized in practice, and Fig.~\eqref{fig:comp2} demonstrates the effective parallelized scaling behavior.
        }

    \subsection{Experimental Validation}

        We demonstrate empirical validation of the proposed Stein variational ergodic search method on a 3D uniform coverage problem in a crowded domain (see Fig.~\ref{fig:exp_cluttered_search}). 
        We solve for a set of $4$ ergodic trajectories (using Alg.~\ref{alg:stein_erg_traj_opt}) at a length of $T=50$ with a time step $dt=0.1s$. 
        The coverage problem is defined over the 3D domain $\cW=[0,3] \times [0,3] \times [0.5,1.5]$. 
        Trajectories are constrained to satisfy feasible drone velocities according to the specifications of the Crazyflie 2.1 drone~\cite{crazyswarm}.
        The best trajectory is selected based off the heuristic~\eqref{eq:heuristic}. 
        The path was tracked using a low-level PID controller 
        We find close correspondence with the planned trajectories and what the drone was able to execute in the in a physical experiment. 
        Furthermore, the diversity in coverage solution can be used to readily adapt exploration as needed. 

\revision{
\section{Limitations and Best Practises } 

    \vspace{0.25em}
    \noindent 
    \textbf{Mode Collapse. } A well-known issue with SVGD is the reduction of the repulsive force in Eq.~\eqref{eq:steinE_step} as the dimensionality of the input space increases. This causes the particles to concentrate in the modes of the target distribution potentially diminishing its exploration capabilities and accuracy to capture long-tail distributions. Fortunately, there are several ways to deal with high dimensional inputs by exploiting the structure of our problem. In particular, since we are planning over the space of trajectories, there is an implicit Markovian assumption over the inference variables. This allow us to construct structured kernels such as the sum of kernels defined over subsets of variables that minimize the impact of performing inferences over long sequences~\cite{zhuo2018mpstein}. We demonstrate an example of mode collapse in Fig.~\eqref{fig:mode_collapse} by explicitly setting the repulsive force in Eq.~\eqref{eq:steinE_step} to zero.

    \vspace{0.25em}
    \noindent 
    \textbf{Scaling. } We demonstrate in Fig.~\eqref{fig:comp2} that the Stein variational gradient step empirically scales linearly in the time horizon $T$ due to the added parallelization, despite the $\mathcal{O}(T^2)$ complexity of the kernel matrix evaluation. However, we empirically observe quadratic scaling with respect to the scaling dimension $v$ due to the exponential number of basis functions required in the ergodic cost function~\eqref{eq:ergodic_met}. This scaling behavior is inherently a part of the definition of the spectral cost function and is independent to our primary contribution of incorporating SVGD into ergodic search. This is a well-known limitation in the literature~\cite{miller2015ergodic, scott2009capturing}, and various methods to address this have recently been studied~\cite{abraham2021ergodic, shetty2021ergodic, sun_fast_2024}. Our Stein methodology is already able to generate multiple diverse trajectories within these limitations, and we plan to address these scaling limitations in future work.

    \vspace{0.25em}
    \noindent 
    \textbf{Best Practises. } We found that the less nonlinear the constraints are the easier it was to produce ergodic trajectories. 
    This is true in most trajectory optimization and model-predictive control approaches. 
    However, certain aspects and choices have more influence as a result of the limitations discussed earlier. 
    For example, longer time horizons can be optimized and yield non-myopic search strategies with the appropriate choice of time step $dt$ for the dynamics. 
    Similarly, one can achieve very poor and myopic search strategies when the step-size is chosen too small (which is highly dependent on the time-scale of the dynamics). 
    Prior work on the impact of integration scheme and step-size has been explored~\cite{prabhakar2015symplectic} in isolation and should be referred to for best practises in choosing an appropriate time step and time horizon. 
    
    Another consideration is the number of basis functions used for computing the ergodic spectrum. 
    The more features that the underlying information measure $\mu$ has, the more beneficial it is to have a high number of basis functions (see Fig.~\ref{fig:sensor_based_exploration}). 
    This can incur additional computational costs, especially in high-dimensional exploration tasks. 
    Specialized order reduction techniques, e.g.,~\cite{shetty2021ergodic}, can be used to effectively reduce the required number of basis functions and parallelization can further reduce computational overhead.
    A related consideration is the step-size of the stein variational gradient. 
    We found that a step-size $\epsilon \le 0.5$ was sufficient for most problems and varied depending on the problem complexity and numerical scale, i.e., the more nonlinear the constraints and objective the smaller $\epsilon$ has to be to converge. 
    A line search can be performed to find the optimal step-size for the specific problem.
}   
\section{Conclusion} 
\label{sec:conclusion}

    In summary, we present a novel formulation of coverage and exploration as a variational inference problem on distributions of ergodic trajectories. 
    Efficient computation of posterior ergodic trajectories was facilitated through Stein variational inference that provided sufficient parallelization and approximate inference to make the problem tractable. 
    We found that the Stein variational gradients were well suited to being formulated with ergodic metrics as the natural multiscale aspect of ergodicity produces numerically well-behaved gradients for trajectory optimization and model-based control. 
    As an outcome, we proposed two algorithms: 1) for directly computing distributions of ergodic trajectories; and 2) for calculating robust online model-predictive controller for robotic exploration. 
    
    We demonstrated empirical evidence that showed our proposed approach generated diverse trajectories that can be used for exploration and proved convergence of our approach to optimal ergodic trajectories. 
    Constraints and multiscale implementations were demonstrated as an advantage of our approach through several simulated examples.  
    Furthermore, we show the computational efficiency of our approach through parallel compute which was further improved by the spectral composition of computing ergodic trajectories. 
    Physical experiments on a drone provided additional validation of our approach on real-world systems for robust and adaptive exploration.

\section*{Acknowledgments}
DL was supported by the Hong Kong Innovation and Technology Commission (InnoHK Project CIMDA). CL, IA are supported by Yale University. 

\bibliographystyle{plainnat}
\bibliography{references}

\appendices

\section{Proofs} 

\subsection{Spectral Ergodic Metric}\label{apx:ergodic_cost}
In ergodic theory, ergodicity is typically defined for an entire dynamical systems on a probability space rather than pathwise, which is the approach required here. Indeed, suppose $(\cX, \cF, \mu)$ is a probability space and $f_t: \cX \to \cX$ is an ergodic flow. Then Birkhoff's Ergodic Theorem in continuous time (see~\cite[Section 1.2.2]{krengel2011ergodic}) states that for any $\phi \in \cL^1(\mu)$, time averages of $\phi$ converge to space averages of $\phi$ for $\mu$-almost every initial $x_0 \in \cX$; in other words,
\begin{align}
    \lim_{T \to \infty} \frac{1}{T} \int_0^T \phi(f_t(x_0)) dt = \int_\cX \phi d \mu.
\end{align}

However, for an individual path $x(t) = f_t(x_0)$, there is no guarantee this holds. In fact, the left hand side may not even be well-defined since $\phi \in \cL^1(\mu)$ is only defined $\mu$-almost everywhere and the time averaged measure $\cQ_T$ from Eq.~\eqref{eq:Q_T} may not be absolutely continuous with respect to $\mu$. Thus, we use an alternative definition, replacing the $\cL^1(\mu)$ functions with continuous functions $\cC(\cX)$ for the pathwise Definition~\ref{def:ergodicity}.

This pathwise definition generalizes the spectral cost function, which has previously been used as a measure of pathwise ergodicity~\cite{mathew2011metrics, Dong-RSS-23}.

\begin{theorem}
    Let $\cW = [0, L_0] \times \ldots \times [0, L_{v-1}] \subset \R^v$. The spectral cost function is an ergodic cost function.
\end{theorem}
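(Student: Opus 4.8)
The plan is to show that $\cE_\mu(x|_{[0,T]}) \to 0$ forces the time-averaged measures $\cQ_T$ to converge weakly to $\mu$, which is exactly the definition of ergodicity and hence of a $\mu$-ergodic cost function. The argument proceeds by first extracting mode-wise convergence of the Fourier coefficients, then invoking density of the Fourier basis in $\cC(\cW)$ to upgrade this to weak convergence against all continuous test functions.

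First I would unpack the hypothesis. Since every summand $\Lambda_k (Q^k_T - \mu^k)^2$ in Eq.~\eqref{eq:ergodic_met} is nonnegative and every weight $\Lambda_k = (1 + \|k\|_2)^{-(v+1)/2}$ is strictly positive, each fixed term is dominated by the full sum; therefore $\cE_\mu(x|_{[0,T]}) \to 0$ implies $Q^k_T \to \mu^k$ for every $k \in \mathcal{K}^v$. Recalling that $Q^k_T = \int_\cW F_k\, d\cQ_T$ and $\mu^k = \int_\cW F_k\, d\mu$, this says precisely that the integral of each basis function $F_k$ against $\cQ_T$ converges to its integral against $\mu$.

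Second I would establish that the linear span of $\{F_k : k \in \mathcal{K}^v\}$ is dense in $\cC(\cW)$ under the uniform norm, interpreting $\mathcal{K}^v$ as the full index set $\N^v$ so that the basis is complete. The key point is that this span is in fact a subalgebra: applying the product-to-sum identity $\cos\alpha\cos\beta = \tfrac12(\cos(\alpha-\beta) + \cos(\alpha+\beta))$ coordinatewise shows every product $F_k F_{k'}$ is a finite linear combination of basis functions. The span contains the constants (the $k=0$ mode) and separates points of $\cW$, since each coordinate map $w_i \mapsto \cos(\pi w_i / L_i)$ is strictly monotone, hence injective, on $[0,L_i]$. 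The Stone--Weierstrass theorem then yields density.

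Finally I would upgrade mode-wise convergence to full weak convergence via a three-term triangle-inequality argument. Given $\phi \in \cC(\cW)$ and $\epsilon > 0$, pick a finite combination $\psi = \sum_{k \in S} c_k F_k$ with $\|\phi - \psi\|_\infty < \epsilon$ and split $\int \phi\, d\cQ_T - \int \phi\, d\mu$ through $\psi$. The two approximation terms are each at most $\epsilon$ because $\cQ_T$ and $\mu$ are probability measures of unit total mass, while the middle term $\int \psi\, d\cQ_T - \int \psi\, d\mu$ is a finite linear combination of the converging differences $Q^k_T - \mu^k$ and hence tends to $0$. Letting $T \to \infty$ and then $\epsilon \to 0$ gives $\int \phi\, d\cQ_T \to \int \phi\, d\mu$ for all continuous $\phi$, i.e. weak convergence $\cQ_T \to \mu$, and therefore ergodicity of $x(t)$. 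I expect the last two steps taken together---the passage from ``all Fourier modes converge'' to ``the measures converge weakly''---to be the technical heart: the uniform boundedness supplied by working with probability measures on the compact domain $\cW$ is what closes the estimate, and some care is needed both to verify the Stone--Weierstrass hypotheses and to note that a finite truncation of $\mathcal{K}^v$ would not suffice for density.
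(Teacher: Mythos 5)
Your proof is correct, and it takes a genuinely different route from the paper's for the key step. Both arguments begin identically: since each weight $\Lambda_k$ is strictly positive and each summand of $\cE_\mu$ is nonnegative, $\cE_\mu(x|_{[0,T]}) \to 0$ forces mode-wise convergence $Q^k_T \to \mu^k$ for every $k$. From there the paper identifies $\cW$ with a $v$-dimensional torus (periodic boundaries) and invokes L\'evy's continuity theorem: pointwise convergence of the Fourier transforms of the probability measures $\cQ_T$ implies their weak convergence to $\mu$. You instead stay on the compact rectangle and prove density of $\mathrm{span}\{F_k : k \in \N^v\}$ in $\cC(\cW)$ via Stone--Weierstrass --- the product-to-sum identity makes the span an algebra, the $k=0$ mode supplies constants, and injectivity of $w_i \mapsto \cos(\pi w_i/L_i)$ on $[0,L_i]$ gives separation of points --- then close with the standard $\epsilon/3$ estimate, using that $\cQ_T$ and $\mu$ have unit total mass. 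What each approach buys: the paper's proof is shorter because it outsources the hard step to a named theorem, but its one-line torus identification actually hides a subtlety --- the cosine-only basis is not the full Fourier basis of the torus obtained by gluing $0 \sim L_i$ (the sine coefficients are missing), so making L\'evy's theorem apply rigorously requires an even reflection to a doubled torus and a folding argument for the measures. Your Stone--Weierstrass route avoids that entirely, is self-contained, and correctly flags that the full index set $\N^v$ (not any finite truncation) is what makes the density argument work. The two checks you singled out as the technical heart --- verifying the algebra/separation hypotheses and the uniform-mass bound closing the triangle inequality --- are exactly the right places where care is needed, and you handled both.
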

\begin{proof}
    Here, we view $\cW$ with periodic boundaries, and is thus a $v$-dimensional torus (ie. $v$ copies of the circle $S^1$). Suppose $\cE_\mu$ is the spectral cost function defined in Eq.~\eqref{eq:ergodic_met}, and let $x(t): \R^+ \to \cX$ be a trajectory such that $\cE_\mu(x|_{[0,T]}) \to 0$ as $T \to \infty$. This implies that the Fourier coefficients converge $\cQ_T^k \to \mu^k$ as $T \to \infty$, and thus the Fourier transforms of $\cQ_T$ converge to the Fourier transform of $\mu$. Finally, by Levy's convergence theorem for the torus, $\cQ_T$ converges weakly to $\mu$ as $T \to \infty$.
\end{proof}

\subsection{Convergence of SVGD} \label{apx:convergence}
In this appendix, we show that the general convergence results for SVGD from~\cite{korba2020non-asymptotic} hold in the ergodic search setting. We consider discretized ergodic search on a bounded state space $\cX \subset \R^n$ and normalized workspace $\cW = [0,1]^v$ with $T$ time points. Without loss of generality we set $\cX = [0,1]^n$ and thus, we consider SVGD on the space $\cX^T = [0,1]^{Tn}$. We fix a function $g: \cX \to \cX$. Moreover, we work in the population limit of infinitely many initial samples from the prior distribution.

Let $\cE_\mu$ be the spectral ergodic cost function with respect to a measure $\mu$ on $\cW$ and $p$ be the prior distribution on $\cX^T$. Then, the target distribution we aim to approximate is the posterior $\pi \coloneqq p(\bfx | \cO) \sim p(\cO | \bfx) p(\bfx)$ which has the form 
\begin{align}
    \pi \sim \exp\left( - \lambda \cE_\mu (\bfx) + \log(p(\bfx)) \right).
\end{align}
The function
\begin{align} \label{eq:potential}
    V(\bfx) = \lambda \cE_\mu (\bfx) - \log(p(\bfx))
\end{align}
is called the \emph{potential function} in the SVGD literature. Furthermore, suppose $k: \cX^T \times \cX^T \to \R$ is the RBF kernel and $\cH$ is its corresponding RKHS.

Convergence of SVGD is framed in terms of the kernel Stein discrepancy (KSD). In particular, the KSD of a measure $q$ with respect to the target measure $\pi$ is
\begin{align}
    \KSD_\pi(q) \coloneqq \| \phi^*(q)\|^2_{\cH^{Tn}},
\end{align}
where $\phi^*_q \in \cH^{Tn}$ (note that $\cX^T \subset \R^{Tn}$),
\begin{multline} \label{eq:phi_star_inf}
    \phi^*(q) 
    = \E_{\bfx \sim q} [ k(\bfx,\cdot) (\nabla_\bfx \log p(\bfx) - \\ \lambda \nabla_\bfx \cE_\mu(\bfx)) + \nabla_\bfx k(\bfx,\cdot)].
\end{multline}
In particular, $\KSD_\pi(q)$ is determined by the norm of the gradient in the SVGD update step (compare with the finite particle gradient $\phi^*_r$ in Eq.~\eqref{eq:phi_star}. Here, we define SVGD in the population limit in the same manner as the finite particle setting. The population gradient at step $r$ is $\phi^*_r \coloneqq \phi^*(q_r)$, where the step $r$ measure $q_{r}$ on $\cX^{T}$ is defined as the pushforward of $q_{r-1}$ along the function $U_r : \cX^T \to \cX^T $, 
\begin{align}
    U_r(\bfx) \coloneqq \bfx + \epsilon \phi^*_{r-1}(\bfx)
\end{align}
with step size $\epsilon > 0$ and initial condition $q_0 = p$. 


Furthermore, the SVGD convergence results from~\cite{korba2020non-asymptotic} rests on three assumptions on the kernel $k$, the potential function $V$, and moments of the step $r$ measures $q_r$. In particular, there exist constants $B, C, M > 0$ such that the following holds.
\begin{itemize}
    \item[($A_1$)] $\|k(\bfx,\cdot)\|_\cH, \|\nabla_\bfx k(\bfx, \cdot)\|_{\cH^{Tn}} \leq B$.
    \item[($A_2$)] The Hessian $H_V$ of the potential function $V$ from Eq.~\eqref{eq:potential} is well-defined as $\|H_V\|_{op} \leq M$.
    \item[($A_3$)] For all $r$, $\KSD_\pi(q_r) < C$.
\end{itemize}

\setcounter{theorem}{0}
\begin{theorem} (Formal). Let $\cX = [0,1]^n$, $\pi = p(\bfx | \cO)$ be the target distribution with potential $V$ from Eq.~\eqref{eq:potential}, $p$ be smooth prior distribution on $\cX^T$, $k$ be the RBF kernel on $\cX^T$ with RKHS $\cH$. There exists a step size $\epsilon < S$, where $S$ is a constant which depends on $B,C,M$ such that
    \begin{align}
        \frac{1}{r} \sum_{i=1}^r \KSD_\pi(q_i) \leq \frac{\KL(p|\pi)}{c_\epsilon r},
    \end{align}
    where $c_\epsilon$ is a constant which depends on $\epsilon, M, B$.
\end{theorem}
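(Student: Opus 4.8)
The plan is to reduce the statement to~\cite[Corollary 6]{korba2020non-asymptotic}, which delivers precisely the averaged bound $\frac{1}{r}\sum_{i=1}^r \KSD_\pi(q_i) \leq \KL(p|\pi)/(c_\epsilon r)$ once its three hypotheses $(A_1)$--$(A_3)$ are in force. The entire content of the proof is therefore the verification of those assumptions for the RBF kernel on the compact cube $\cX^T = [0,1]^{Tn}$, with potential $V = \lambda \cE_\mu - \log p$ and Gaussian prior $p$; the corollary then supplies the step-size threshold $S$ and the constant $c_\epsilon$ automatically.

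First I would dispatch $(A_1)$. For the normalized RBF kernel the reproducing property gives $\|k(\bfx,\cdot)\|_\cH^2 = k(\bfx,\bfx) = 1$, so the first inequality holds with constant $1$. For the gradient bound, each component $\partial_{x_j} k(\bfx,\cdot)$ is the representer of a partial-derivative evaluation functional, and its squared RKHS norm equals a mixed second derivative of the kernel restricted to the diagonal; since the RBF and all of its derivatives are bounded on the compact cube, this yields a finite $B$.

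Next I would verify $(A_2)$, the boundedness of $\|H_V\|_{op}$. The potential splits as $\lambda\cE_\mu - \log p$. The spectral cost $\cE_\mu$ from Eq.~\eqref{eq:ergodic_met} is a finite sum over $\mathcal{K}^v$ of squares of the smooth maps $\bfx \mapsto Q_T^k$, each a finite average of cosine evaluations $F_k(g\circ x_t)$; hence $\cE_\mu \in C^\infty(\cX^T)$ and its Hessian is bounded on the compact domain. For the Gaussian prior $p = \mathcal{N}(\hat{\bfx},\sigma^2)$ one has $-\log p(\bfx) = \tfrac{1}{2\sigma^2}\|\bfx - \hat{\bfx}\|^2 + \mathrm{const}$, with constant Hessian $\sigma^{-2} I$. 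Summing and taking operator norms gives a finite $M$.

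The main obstacle is $(A_3)$, the uniform bound $\KSD_\pi(q_r) < C$ over all iterates. Writing $\KSD_\pi(q_r) = \|\phi^*(q_r)\|_{\cH^{Tn}}^2$ where $\phi^*(q_r)$ is the $q_r$-expectation of the integrand $k(\bfx,\cdot)(\nabla_\bfx\log p(\bfx) - \lambda\nabla_\bfx \cE_\mu(\bfx)) + \nabla_\bfx k(\bfx,\cdot)$, the bounds from $(A_1)$--$(A_2)$ show this integrand has uniformly bounded RKHS norm as long as $\bfx$ ranges over a bounded set, so the point is to guarantee that the supports of the iterates $q_r$ (pushforwards under $U_r(\bfx) = \bfx + \epsilon\phi^*_{r-1}(\bfx)$) do not escape to infinity. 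On the compact cube $\cX^T$ this is where the confinement mechanism enters: either one runs SVGD with the iterates constrained to $[0,1]^{Tn}$, or one relies on the barrier/penalty terms folded into $V$ to keep mass in $\cX^T$; in either case every $\bfx$ in the support lies in a fixed bounded region, the integrand is uniformly bounded, and Jensen's inequality produces a constant $C$ independent of $r$. I expect reconciling this confinement with the smoothness needed for $(A_2)$ to be the genuinely delicate step, since a hard barrier produces an unbounded Hessian near the boundary, whereas everything else follows routinely from smoothness on a compact set. With $(A_1)$--$(A_3)$ established, Corollary 6 applies and yields the claimed bound.
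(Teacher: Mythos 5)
Your proposal follows the same route as the paper: both reduce the theorem to \cite[Corollary 6]{korba2020non-asymptotic} and devote the entire proof to verifying $(A_1)$--$(A_3)$, and your verifications of $(A_1)$ (reproducing property plus boundedness of kernel derivatives on the compact cube) and $(A_2)$ (smoothness of $\cE_\mu$ through the cosine basis $F_k$, smoothness of the prior, boundedness of the Hessian on a bounded domain) match the paper's almost verbatim. The one genuine divergence is $(A_3)$: the paper does not bound the KSD directly, but instead invokes the discussion in \cite[Section 5]{korba2020non-asymptotic}, under which $(A_1)$ and $(A_2)$ reduce $(A_3)$ to the uniform first-moment bound $\sup_r \int_{\cX^T} \|\bfx\|\, dq_r(\bfx) < \infty$, which it then declares immediate because each $q_r$ is a probability measure on a bounded domain. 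Your direct argument --- a uniform RKHS bound on the integrand over the support of $q_r$ followed by Jensen's inequality --- is a legitimate alternative, and the concern you flag is real: the pushforwards under $U_r(\bfx) = \bfx + \epsilon \phi^*_{r-1}(\bfx)$ need not remain inside $[0,1]^{Tn}$, so either a confinement mechanism or globally valid bounds on $\nabla V$ (e.g., globally bounded cosines plus a Gaussian prior) is needed; the paper's one-line justification of the moment bound silently assumes exactly this. In short, your route is slightly more laborious but more explicit about where the argument is fragile, while the paper's route is shorter because the moment condition it checks is weaker than uniform boundedness of the integrand --- yet both rest on the same unstated assumption that the iterates stay in (or near) the bounded domain.
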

\begin{proof}
    This result is a special case of~\cite[Corollary 6]{korba2020non-asymptotic}, and in order to prove this result, we must show that assumptions $(A_1), (A_2), (A_3)$ above hold. First, $(A_1)$ holds since the RBF kernel is differentiable, and $\cX^T$ is a bounded domain. Next, we note that for a discrete path $\bfx = [x_0, \ldots, x_{T-1}]$, the spectral ergodic cost fucntion has the form
    \begin{align}
        \cE_\mu(\bfx) = \sum_{k \in \mathcal{K}^v} \Lambda_k \left( \frac{1}{T} \sum_{t=0}^{T-1} F_k(g(x_t)) - \int_{\cW} F_k(w) d\mu(w)\right)^2,
    \end{align}
    which is smooth since $F_k$ from Eq.~\eqref{eq:Fk} is smooth. Because the prior $p$ is also smooth, the potential $V$ is smooth, and the Hessian is well-defined. Furthermore, the Hessian is bounded since $\cX^T$ is a bounded domain, so $(A_2)$ is satisfied. Finally, since $(A_1)$ and $(A_2)$ is satisfied, it suffices to show that
    \begin{align}
        \sup_{r} \int_{\cX^T} \|\bfx\| \, dq_r(\bfx) < \infty,
    \end{align}
    to show $(A_3)$, from the discussion in~\cite[Section 5]{korba2020non-asymptotic}. However, this is satisfied since $q_r$ is a probability measure and $\cW^T$ is a bounded domain, so $(A_3)$ is satisfied. 
\end{proof}

\section{Implementation Detail \& Additional Results} \label{apx:details}

    Here, we provide additional implementation details for the experimental results in Section~\secref{sec:results}. For all methods, the Stein variational gradient step size is given by $\epsilon=0.5$ unless otherwise specified. In addition, the RBF kernel is used in all examples unless otherwise specified. Convergence condition is the same for all examples $\Vert \phi_r \Vert \le 10^{-3}$ unless specified.

    \vspace{0.25em}
    \noindent
    \textbf{Convergence and Diversity Results.} For all examples we assume a bounded 2D domain of size $\cW = [0,1]\times [0,1]$. 
    All trajectories are of length $T=100$ with $n=2$ and an assumed $dt=0.1s$. 
    The solvers are all optimized until the gradient direction $\Vert \phi_r \Vert \le 10^{-3}$. 
    The cost function $\cL_\mu$ is given as 
    \begin{align*}
        \cL_\mu(\bfx) = \cE_\mu(\bfx) + 0.1 c_\cW(\bfx) + \sum_t 15 \Vert x_{t+1} - x_{t} \Vert_2^2 \\ 
        + 0.1 \Vert x_0 - x_\text{init}\Vert_2^2  + 0.1 \Vert x_{T} - x_\text{final}\Vert_2^2
    \end{align*}
    where $c_\cW$ is a cost on staying within the boundaries of $\cW$ ($0$ within the boundaries and proportional square of the distance outside the boundaries), and $x_\text{init}$ and $x_\text{final}$ are the initial and final state conditions.
    All initial samples $\{ \bfx^i \}_{i=1}^N$ are drawn from $p(\bfx) = \mathcal{N}(\hat{\bfx}, \sigma^2)$ where $\sigma^2 = 0.01$, and $\hat{\bfx} = \text{interp}(x_\text{init}, x_\text{final})$. A temperature value $\lambda=10$ was used for all experiments. The maximum number of basis functions used was $k_\text{max} =8$ for all dimensions.

    \vspace{0.25em}
    \noindent
    \textbf{Multiscale Constrained Exploration.}
        This experiment was done using Alg.~\ref{alg:stein_erg_traj_opt} subject to a uniform $\mu$ in 2D. 
        The function $g(x)$ would map the trajectory within the $100m\times 100m$ domain onto a $[0,1] \times [0,1]$ domain for improved numerical conditioning. 
        Note that this did not change the underlying problem as the ergodic metric is defined over the Fourier spectral domain which can be defined on any periodic domain.
        The trajectories are of length $T=100$ with a time step of $dt=1s$ and a dimension of $n=2$. A maximum of $k_\text{max}=8$ basis functions were used. 
        The cost function is given as 
        \begin{align*}
            \cL_\mu(\bfx) = \cE_\mu(\bfx) + 0.1 c_\cW(\bfx) + \sum_t 15 \Vert x_{t+1} - x_{t} \Vert_2^2 \\ 
            + 0.1 \Vert x_0 - x_\text{init}\Vert_2^2  + 0.1 \Vert x_{T} - x_\text{final}\Vert_2^2 + 0.01 c_\text{obs}(\bfx)
        \end{align*}
        where $c_\text{obs}(\bfx)$ is a obstacle constraint given by $\text{max}(0,\Vert x - x_c \Vert_2 - r)$ where $r$ is the radius of the obstacle. 
        The initial and final positions of the trajectory are specified by the boundaries of the domain.

    \vspace{0.25em}
    \noindent
    \textbf{3D Constrained Exploration with Nonlinear Dynamics.} 
    In this example the state of the aircraft dynamics are given as $x = [p_x, p_y, p_z, \psi, \phi, v]^\top$ where $p_i$ is the position in 3D, $\phi, \phi$ are the roll and pitch, and $v$ is the forward velocity of the aircraft. 
    The control vector is given by $u = [u_1, u_2, u_3]^\top$ where the dynamics is defined as 
    \begin{equation}
        \frac{d}{dt}\begin{bmatrix}
            p_x \\ p_y \\ p_z \\ \psi \\  \phi \\ v
        \end{bmatrix} = \begin{bmatrix}
            v \cos(\phi) \cos(\psi) \\ 
            v \cos(\phi) \sin(\psi) \\ 
            v \sin(\phi) \\ 
            u_1 \\ 
            u_2 \\ 
            u_3
        \end{bmatrix}.
    \end{equation}
    The distribution $\mu$ is given by a sum of two Gaussians with equal weights. 
    The space is defined by $\cW = [0,3] \times [0,3] \times [0,3]$ and $g(x)$ takes the states and returns the position vector of the aircraft. 
    The obstacle is represented by a distance penalty function $\text{max}(0,\Vert x - x_c \Vert_2 - r)$ where $r$ is the radius of the obstacle. 
    Trajectories are of length $T=150$. 
    Initial controls are sampled from $p(\bfu) = \mathcal{N}(0,\sigma^2)$ where $\sigma^2=0.1$. 
    The cost function is given as 
    \begin{align*}
        \cL_\mu(\bfu) = \cE_\mu(\bfu) + c_\cW(\bfx) + \sum_t \Vert p_{z,t} \Vert_2^2 +  100 c_\text{obs}(\bfx)
    \end{align*}
    and $\bfx$ is computed using $x_{t+1} = F(x_t, u_t)$ from the initial condition. The temperature $\lambda$ has value $20$ and $N=5$ samples are computed. The step size of the Stein variational algorithm is given as $\epsilon = 0.1$.

    \vspace{0.25em}
    \noindent
    \textbf{Stein Variational Model-Predictive Control.} 
        In this example, we consider coverage over a quad-modal distribution $\mu$ which is composed of $4$ Gaussians with equal weights. The domain is specified over $\cW = [0,1] \times [0,1]$ and the dynamics are given as a single integrator $x_{t+1} = x_t + dt u_t$. 
        The planning horizon is given as $T=20$ with $N=20$ trajectory samples. 
        We run each planning loop for a total of $100$ iterations or unless tolerance was achieved of $\gamma = 10^{-2}$. 
        A total of $k_\text{max} = 10$ basis functions are used for each dimension. 
        The $10$ obstacles are randomly placed using a uniform distribution on $\cW$ and move according to random velocity directions sampled from a normal distribution with standard deviation of $\sigma=0.01$ at the same control rate as the robot ($dt=0.1$). 
        The cost function is given as 
        \begin{align*}
            \cL_\mu(\bfu) &= \cE_\mu(\bfu) + c_\cW(\bfx) + 0.01\Vert \bfu \Vert_2^2 \\
            &+   \sum_t 0.001 \Vert x_{t+1} - x_{t} \Vert_2^2 
            + 100 c_\text{obs}(\bfx)
        \end{align*}
        with $\lambda=10$. The initial control prior distribution is given as $p(\bfu) = \mathcal{N}(0, \sigma^2=0.01)$.

    \vspace{0.25em}
    \noindent
    \revision{
    \textbf{Sensor-based Exploration.} 
        In this example, we consider coverage over the entropy of a Gaussian process using the AK attentive kernel as $\mu$. The domain is specified over $\cW = [-10,10] \times [-10,10]$ and the sensor planning dynamics are given as a single integrator $x_{t+1} = x_t + dt u_t$. 
        The trajectory is tracked using a Dubins car dynamics model with a feedback controller rate of $10$ Hz as done in~\cite{Chen-RSS-22}.
        The planning horizon is given as $T=20$ with $N=4$ trajectory samples. 
        We run each planning loop for a total of $100$ iterations or unless tolerance was achieved of $\gamma = 10^{-2}$. 
        A total of $k_\text{max} = 20$ basis functions are used for each dimension. 
        Samples are committed using a sonar sensor model at each $1$ Hz rate location. 
        The cost function is given as 
        \begin{align*}
            \cL_\mu(\bfu) &= \cE_\mu(\bfu) + 0.1 c_\cW(\bfx) + 0.001\Vert \bfu \Vert_2^2
        \end{align*}
        with $\lambda=100$ and $p(\bfu) = \mathcal{N}(0, \sigma^2=0.01)$.
        }

    \vspace{0.25em}
    \noindent
    \textbf{3D Coverage using the Crazyflie Drone.} 
        This example was done on a 3D domain $\cW = [0,3] \times [0,3] \times [0.5,1.5]$ with $\mu$ as a uniform distribution. 
        Trajectories were calculated using Alg.~\ref{alg:stein_erg_traj_opt} with cost functional similar to the multiscale forest example
        \begin{align*}
            \cL_\mu(\bfx) = \cE_\mu(\bfx) + 0.1 c_\cW(\bfx) + \sum_t 15 \Vert x_{t+1} - x_{t} \Vert_2^2 \\ 
            + 0.1 \Vert x_0 - x_\text{init}\Vert_2^2  + 0.1 \Vert x_{T} - x_\text{final}\Vert_2^2 + 0.01 c_\text{obs}(\bfx).
        \end{align*}
        The trajectory length was given by $T=50$ with $dt=0.1s$. 
        The total number of basis functions was given as $k_\text{max}=8$ for each dimension (totalling $512$ basis functions). 

        Once trajectories were computed, tracking was done through the Crazyflie 2.1 API which sends way-point commands as a specified interval (equal to the trajectory time step). 
        Drone position tracking was done using two Lighthouse VR trackers distributed across the domain. 
        The obstacles are defined using an obstacle penalty function given by $\text{max}(0,\Vert x - x_c \Vert_2 - r)$ where $r$ is the radius of the obstacle. 
        The planner was given the positions of the obstacles at planning time. 
        Trajectories are optimized until convergence. 
        Once trajectories are solved, each is send to the drone to execute. The one that satisfies the heuristic in Eq.~\eqref{eq:heuristic} signals the drone to turn a LED green to indicate the optimal solution.

\end{document}